\documentclass{article} %
\usepackage[T1]{fontenc}
\usepackage{iclr2027_conference,times}

\usepackage{amsmath,amsfonts,bm}

\def\eqref#1{equation~\ref{#1}}
\def\1{\bm{1}}

\DeclareMathAlphabet{\mathsfit}{\encodingdefault}{\sfdefault}{m}{sl}
\SetMathAlphabet{\mathsfit}{bold}{\encodingdefault}{\sfdefault}{bx}{n}

\usepackage{algorithm}
\usepackage{algpseudocode}
\usepackage{amsthm}
\usepackage{multirow}
\usepackage{booktabs}
\usepackage{array}
\usepackage{colortbl}
\usepackage{xcolor}
\usepackage{placeins}
\usepackage{graphicx}
\usepackage{capt-of}
\usepackage{wrapfig}
\usepackage{hyperref}
\usepackage{url}

\newsavebox{\trainingtablebox}

\title{MORCA: Offline-to-Online Reinforcement Learning for Adaptive Cache Reuse in Video Diffusion Acceleration}

\author{\parbox{\dimexpr\textwidth-2\tabcolsep\relax}{\raggedright
Yuxiang Xiong\textsuperscript{1*}\quad
Ruiyan Wang\textsuperscript{1*}\quad
Wenqiang Wang\textsuperscript{2*}\quad
Teng Hu\textsuperscript{1\ensuremath{\ddagger}}\\[0.3ex]
Songhang Shen\textsuperscript{1}\quad
Bohao Feng\textsuperscript{3}\quad
Hongqian Deng\textsuperscript{2}\quad
Ran Yi\textsuperscript{1\ensuremath{\dagger}}\\[0.6ex]
\normalfont\normalsize
\textsuperscript{1}Shanghai Jiao Tong University\quad
\textsuperscript{2}Alibaba Cloud Computing\\
\textsuperscript{3}Alibaba Token Hub, Alibaba Group\\[0.5ex]
{\fontsize{8}{9.5}\selectfont
\texttt{\{xiongyuxiang, 22-wry, hu-teng, 1479014299, ranyi\}@sjtu.edu.cn}\\
\texttt{\{channing.wwq, fengbohao.fbh\}@alibabainc.com}\quad
\texttt{hongqiandeng@foxmail.com}}
}}

\theoremstyle{plain}
\newtheorem{proposition}{Proposition}[section]

\iclrfinalcopy
\begin{document}

\maketitle
\begingroup
\renewcommand{\thefootnote}{}
\begin{NoHyper}
\footnotetext{\textsuperscript{*}Equal contribution.\quad
\textsuperscript{\ensuremath{\dagger}}Corresponding author.\quad
\textsuperscript{\ensuremath{\ddagger}}Project leader.}
\end{NoHyper}
\endgroup

\begin{abstract}
Diffusion Transformers (DiTs) achieve remarkable performance in video synthesis, but their iterative denoising process suffers from high inference latency. To address this, caching has emerged as an effective acceleration strategy by capitalizing on %
inter-step redundancy during denoising.
 Existing dynamic caching methods typically estimate the error that cache reuse would introduce at each denoising step (step error) to guide cache decisions, whereas our concern is how much quality loss cache reuse would cause in the final generated video (terminal error). We show that step error does not directly correspond to terminal error and that latent information helps capture their relationship, thereby informing cache decisions. Moreover, existing threshold-based methods cannot provide precise speedup control, making it difficult to meet practical requirements for user-specified acceleration targets. To address these limitations, we introduce \textbf{\fontencoding{T1}\selectfont MORCA}, a cache scheduling framework trained through offline-to-online reinforcement learning to make latent-aware reuse/recompute decisions under user-specified acceleration targets. Extensive experiments on different video generation models across multiple target acceleration ratios demonstrate that MORCA achieves better generation fidelity than state-of-the-art caching methods under comparable computational budgets.
Code is available at \url{https://github.com/x10ngyx/MORCA}.
\end{abstract}

\section{Introduction}
\label{sec:intro}

\suppressfloats[t]
\newlength{\introtextfloatsep}
\setlength{\introtextfloatsep}{\textfloatsep}
\setlength{\textfloatsep}{6pt}
\AddToHook{shipout/after}[morca-intro-spacing]{%
  \ifnum\value{page}=2\relax
    \global\textfloatsep=\introtextfloatsep\relax
    \RemoveFromHook{shipout/after}[morca-intro-spacing]%
  \fi
}
\begin{figure}[t]
\centering
\includegraphics[width=\linewidth]{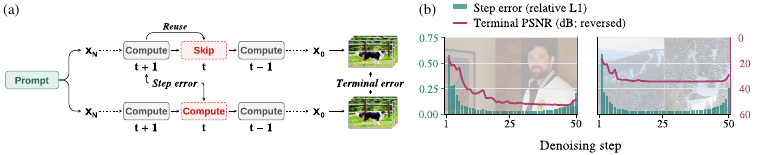}
\begingroup
\setlength{\abovecaptionskip}{1pt}
\caption{\textbf{\fontencoding{T1}\selectfont Step error and terminal error.} (a) Following SeaCache, we cache and reuse the residuals of the DiT blocks. For each step, we evaluate two cases: (i) reusing the residual cached at the previous step and (ii) recomputing the DiT blocks, with all other steps computed normally. We measure step error as the relative L1 difference between the cached and recomputed residuals, and terminal error using the PSNR of the resulting video with cache reuse relative to the no-cache reference. (b) Step error does not directly correspond to terminal error. Results are shown for Wan2.1.}
\label{fig:step-terminal-error}
\endgroup
\end{figure}

Diffusion Transformers (DiTs)~\citep{peebles2023scalable} combined with Flow Matching paradigms~\citep{lipman2023flow,liu2023flow} have demonstrated remarkable capabilities in high-fidelity visual generation~\citep{wan2025wan,kong2024hunyuanvideo}. Despite this progress, the multi-step iterative denoising process incurs substantial computational cost and high inference latency, particularly for video synthesis. To address this bottleneck, various acceleration approaches have emerged, including distillation, quantization, sparse attention, and caching. Among these, distillation requires additional training of the generative model, while quantization and sparse attention typically rely on specialized kernels. Caching offers a lightweight alternative with flexible speed--quality trade-offs.

Cache-based methods exploit inter-step redundancy by skipping the computation of the entire denoising network or selected components (typically the DiT blocks) at some denoising steps and reusing the corresponding results cached at previous steps. Early methods adopt fixed cache schedules~\citep{ma2024deepcache,li2024fasterdiffusion,zhao2025realtime,liu2025taylorseer} (e.g., recomputing only at fixed intervals), which do not account for runtime denoising dynamics. Dynamic caching methods~\citep{liu2025teacache,kahatapitiya2025adacache,chung2026seacache,bu2026dicache} attempt to estimate the error that cache reuse would introduce at each denoising step and use these estimates to make cache decisions through threshold-based rules.

However, this paradigm relies exclusively on the immediate error introduced by cache reuse (\textbf{\fontencoding{T1}\selectfont step error}) to make decisions, whereas our concern is the quality loss in the final generated video (\textbf{\fontencoding{T1}\selectfont terminal error}). How the former translates into the latter through subsequent denoising steps remains unclear. To investigate this relationship, we conduct a series of experiments. As shown in Fig.~\ref{fig:step-terminal-error}, step error exhibits similar patterns across these prompts, while terminal error varies substantially. Quantitative results in Table~\ref{tab:terminal-error-prediction} further show that step error alone is insufficient to accurately predict terminal error.

\noindent
\begin{minipage}[t]{\dimexpr0.5\linewidth-0.5\columnsep\relax}
\vspace{0pt}
We propose that latents may contain information that helps capture how step error translates into terminal error, thus informing cache decisions. To test this idea, we train a predictor to estimate terminal error from step error. As reported in Table~\ref{tab:terminal-error-prediction}, incorporating lightweight features extracted from the input latents at the current and previous steps (see Sec.~\ref{sec:scheduler}) substantially improves prediction accuracy.
\end{minipage}\hfill
\begin{minipage}[t]{\dimexpr0.5\linewidth-0.5\columnsep\relax}
\vspace{0pt}
\setlength{\abovecaptionskip}{0pt}
\setlength{\belowcaptionskip}{5pt}
\captionof{table}{\textbf{\fontencoding{T1}\selectfont Latent features improve terminal-error prediction.} Terminal error is measured using PSNR. Results on Wan2.1.}
\label{tab:terminal-error-prediction}
\fontsize{8}{9.2}\selectfont %
\setlength{\tabcolsep}{2pt}
\centering
\begin{tabular*}{\linewidth}{@{\extracolsep{\fill}}>{\raggedright\arraybackslash}p{\dimexpr0.54\linewidth-2pt\relax}rr@{}}
\toprule
Input & RMSE (dB) $\downarrow$ & Spearman $\uparrow$ \\
\midrule
Step error & 6.971 & 0.5288 \\
Step error + \mbox{step index} & 4.421 & 0.7528 \\
Step error + \mbox{latent} & 2.849 & 0.9357 \\
Step error + \mbox{latent} + \mbox{step index} & 2.832 & 0.9361 \\
\bottomrule
\end{tabular*}
\end{minipage}
\par

Motivated by this finding, we train a latent-aware scheduler to make terminal-error-guided cache decisions. We provide the scheduler with SeaCache's proxy signal for step error (see Sec.~\ref{sec:method-preliminaries}) and latent features to help bridge the gap between step error and terminal error. Given the sequential nature of cache scheduling, we formulate it as a Markov decision process (MDP) and optimize the scheduler using reinforcement learning. In practical applications, users often require a specific acceleration ratio. While existing threshold-based caching methods offer only approximate speedup control through threshold adjustment, we use the observed relationship between speedup and the number of reuse steps (Fig.~\ref{fig:calibration}) to translate the target acceleration ratio into a reuse budget constraint, then reduce the resulting constrained MDP to an equivalent standard augmented MDP (Sec.~\ref{sec:scheduler}).

Since collecting training data requires costly inference runs, we adopt an offline-to-online reinforcement learning framework based on IQL~\citep{kostrikov2022offline} to train the scheduler. As an off-policy algorithm, IQL offers higher data utilization than on-policy methods. Offline training provides an effective and stable initialization from limited data, while online fine-tuning allows the scheduler to explore under its current policy and further improve performance. Our contributions are summarized as follows:
\begin{list}{\textbullet}{%
    \setlength{\leftmargin}{0.9em}%
    \setlength{\rightmargin}{0pt}%
    \setlength{\labelwidth}{0.4em}%
    \setlength{\labelsep}{0.5em}%
    \setlength{\itemindent}{0pt}%
    \setlength{\listparindent}{0pt}%
    \setlength{\itemsep}{\parskip}%
    \setlength{\topsep}{0pt}%
    \setlength{\parsep}{0pt}%
    \setlength{\partopsep}{0pt}%
    \renewcommand{\makelabel}[1]{\hfil#1}}
    \item We propose \textbf{MORCA}, a latent-aware cache scheduling framework trained through offline-to-online reinforcement learning to make adaptive cache decisions under user-specified speedups.
\item We design a \textbf{budget-constrained} and \textbf{latent-aware} scheduler that can make terminal-error-guided cache decisions while precisely controlling the acceleration ratio.
    \item We introduce an \textbf{offline-to-online} RL paradigm for scheduler optimization: the offline stage provides an effective initialization, while online fine-tuning further improves its performance.
    \item Extensive experiments on different video generation models demonstrate that MORCA achieves better speed--quality trade-offs than state-of-the-art caching methods.
\end{list}

\section{Related Work}

\subsection{Diffusion Inference Acceleration}
\label{sec:diffusion-inference-acceleration}

Existing approaches to accelerating diffusion inference mainly include fast sampling, distillation, quantization, sparse attention, and caching. Fast samplers and numerical solvers reduce model evaluations but may sacrifice quality under aggressive step reduction~\citep{song2021denoising,liu2022pseudo,NEURIPS2022_260a14ac,Lu2025,NEURIPS2023_9c2aa1e4}. Distillation enables few-step or one-step generation but requires additional training~\citep{salimans2022progressive,pmlr-v202-song23a,luo2023latentconsistencymodelssynthesizing,Yin_2024_CVPR}. Quantization reduces the precision of weights and activations~\citep{Shang_2023_CVPR,Li_2023_ICCV,NEURIPS2023_2aab8a76,ICLR2025_a4a1ee07}, while sparse attention restricts attention computation to selected token pairs~\citep{pmlr-v267-xi25c}. Both effectively lower per-step computation but typically rely on specialized kernels to realize inference speedups.

\subsection{Caching-based Diffusion Model Acceleration}
Cache-based acceleration methods exploit inter-step redundancy by skipping computation in all or part of the denoising network at selected steps and reusing previously cached results. These methods can be characterized along three complementary dimensions. \emph{Caching Granularity} determines what is cached and reused, including layer or DiT-block outputs~\citep{wimbauer2024cacheme,chen2024deltadit}, attention outputs~\citep{zhao2025realtime}, CFG branch outputs~\citep{lv2024fastercache}, and finer-grained token features~\citep{zou2025toca}. \emph{Cache Reuse Strategy} determines how cached features are utilized, evolving from direct reuse~\citep{ma2024deepcache,zhao2025realtime} to feature correction or approximation~\citep{lv2024fastercache,bu2026dicache}, and further to explicit feature forecasting~\citep{liu2025taylorseer}. \emph{Caching Schedule} determines which steps reuse cached results and which recompute. Early methods typically employ fixed schedules with predefined recomputation steps~\citep{ma2024deepcache,li2024fasterdiffusion,zhao2025realtime,liu2025taylorseer}, which do not adapt to runtime denoising dynamics. Dynamic methods instead estimate the error that cache reuse would introduce at each step and apply threshold-based rules to guide cache decisions. TeaCache uses timestep-modulated input differences as a proxy for output variation, AdaCache further incorporates motion-aware information, DiCache estimates caching errors through shallow-layer probes, and SeaCache derives a spectral-evolution-aware cache signal by suppressing noise-dominated components~\citep{liu2025teacache,kahatapitiya2025adacache,bu2026dicache,chung2026seacache}. Recent works have also explored learning-based cache scheduling~\citep{zhao2026rapid3,xie2026onlinecache,zhao2026bag}, which we discuss and compare in detail in Sec.~\ref{sec:learning-based-cache-scheduling}.

\subsection{Reinforcement-Learning-Based Cache Scheduling}
\label{sec:learning-based-cache-scheduling}

Cache scheduling is naturally a sequential decision problem suited to reinforcement learning. RAPID$^3$ is the first work to formulate cache scheduling as an MDP and train a scheduler through reinforcement learning to make cache-reuse decisions at each denoising step~\citep{zhao2026rapid3}. Concurrent with our work, BAG and OnlineCache explore different approaches to learning cache schedules. BAG constructs a high-quality schedule dataset through expensive offline search and distills these schedules into a lightweight gating network for per-step decisions~\citep{zhao2026bag}. OnlineCache uses REINFORCE to train a latent-aware scheduler, but its on-policy RL paradigm limits data reuse and training flexibility. Moreover, its reward-based speedup control cannot guarantee meeting user-specified acceleration targets~\citep{xie2026onlinecache}.

\section{Method}
\label{sec:method}

In this section, we present MORCA, an offline-to-online reinforcement learning framework for budget-constrained and latent-aware cache scheduling in video diffusion acceleration, as illustrated in Fig.~\ref{fig:pipeline}. Building upon SeaCache (Sec.~\ref{sec:method-preliminaries}), MORCA retains its caching granularity and cache reuse strategy while replacing its threshold-based decision rule with a learned scheduler. In addition to SeaCache's signals and related statistics, the scheduler incorporates latent features and reuse budget dynamics to make terminal-error-guided cache decisions under user-specified acceleration targets (Sec.~\ref{sec:scheduler}). We optimize the scheduler through a two-stage offline-to-online IQL paradigm to achieve both high sample efficiency and strong performance (Sec.~\ref{sec:training}).

\begin{figure}[t]
\centering
\includegraphics[width=\linewidth]{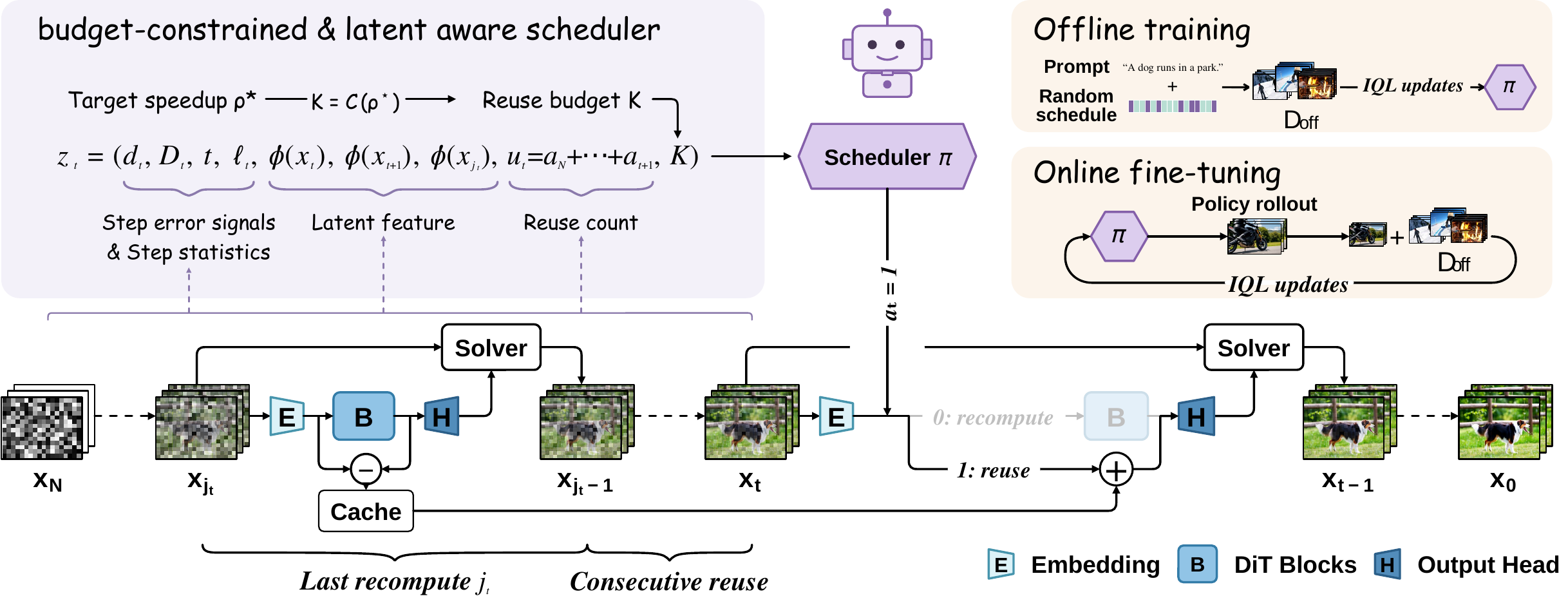}
\caption{\textbf{\fontencoding{T1}\selectfont Overview of MORCA.} Bottom: A learned scheduler makes reuse/recompute decisions at each denoising step. Top left: The scheduler is budget-constrained and latent-aware. The target speedup is mapped to the required total number of reuse steps, which, together with step-error proxies and step statistics, latent features, and the current reuse count extracted from the denoising state, is fed into the scheduler for decision-making (Sec.~\ref{sec:scheduler}). Top right: The scheduler is optimized through offline-to-online reinforcement learning using IQL (Sec.~\ref{sec:training}).}
\label{fig:pipeline}
\end{figure}

\subsection{Preliminaries}
\label{sec:method-preliminaries}

\paragraph{Cache-Based Inference.}
We first introduce the standard denoising process, followed by a typical cache-based inference pipeline. Given a text condition $c$, video generation proceeds through $N$ denoising steps. At step $t$, the input latent $x_t$ passes through a DiT denoiser comprising an input embedding module $E_\theta$, a stack of DiT blocks $B_\theta$, and an output head $H_\theta$:
\begin{equation}
h_t=E_\theta(x_t),\qquad
y_t=B_\theta(h_t,t,c),\qquad
o_t=H_\theta(y_t,t).
\label{eq:method-denoiser-components}
\end{equation}
The latent is updated as $x_{t-1}=\mathcal{U}_t(x_t,o_t)$, where $\mathcal{U}_t$ denotes the diffusion- or flow-based sampling update rule. With classifier-free guidance (CFG), $o_t$ is replaced by $o_t^{\mathrm{CFG}}=o_t^{\mathrm{u}}+w(o_t-o_t^{\mathrm{u}})$, where $o_t^{\mathrm{u}}$ is computed using the empty or negative text condition $c_{\mathrm{u}}$, and $w$ is the guidance scale. After the final denoising step, the latent $x_0$ is decoded into a video by the VAE decoder.

Cache-based inference decides whether to recompute or reuse at each denoising step. A binary action $a_t\in\{0,1\}$ determines whether to skip the DiT blocks and directly reuse a previously cached residual ($a_t=1$) or compute the blocks normally ($a_t=0$). The block residual at step $t$ is defined as $\Delta_t=y_t-h_t$. Accordingly,
\begin{equation}
h_t=E_\theta(x_t),\qquad
\hat{y}_t=
\begin{cases}
B_\theta(h_t,t,c), & a_t=0,\\
h_t+\Delta_{j_t}, & a_t=1,
\end{cases}
\qquad
\hat{o}_t=H_\theta(\hat{y}_t,t),
\label{eq:method-cache-output}
\end{equation}
where $j_t=\min\{j>t:a_j=0\}$ is the most recent recomputation step.

Under CFG, the two branches typically share $a_t$ but maintain separate residual caches. Their outputs are combined using the same guidance rule for the sampling update. The complete cache schedule is $\mathbf{a}=(a_N,\ldots,a_1)$.

\paragraph{SeaCache.}
SeaCache~\citep{chung2026seacache} applies a Spectral-Evolution-Aware (SEA) filter to the timestep-modulated input latent and uses the filtered representation, denoted by $\widetilde{x}_t$, to construct a lightweight proxy signal for estimating the relative change between adjacent steps:
\begin{equation}
d_t=
\frac{\lVert\widetilde{x}_t-\widetilde{x}_{t+1}\rVert_1}
{\lVert\widetilde{x}_{t+1}\rVert_1+\epsilon}.
\label{eq:method-seacache-distance}
\end{equation}
It further computes the accumulated distance since the most recent recomputation step as $D_t=\sum_{i=t}^{j_t-1}d_i$ and determines the action using a threshold $\delta$: $a_t^{\mathrm{SC}}=\mathbb{I}[D_t\leq\delta]$.

\subsection{Budget-Constrained and Latent-Aware Scheduler}
\label{sec:scheduler}

We formulate cache scheduling as a budget-constrained MDP and reduce it to an equivalent standard augmented MDP compatible with standard RL optimization. We then introduce a compact latent-aware state representation for efficient scheduler implementation.

\paragraph{Constrained MDP Formulation.}
Cache scheduling is inherently a sequential decision problem, as each recompute/reuse decision affects subsequent denoising states and therefore has long-term consequences. We formulate it as a finite-horizon MDP $\mathcal{M}=(\mathcal{S},\mathcal{A},P,R,N)$. At each step $t$, $s_t\in\mathcal{S}$ denotes the denoising state, and $a_t\in\mathcal{A}=\{0,1\}$ denotes the action, with $0$ corresponding to recomputation and $1$ to cache reuse. The transition dynamics $P$ describe how the denoising state evolves under the selected action through the corresponding denoising update. We define the reward $R$ in terms of the generated video's quality, measured by its PSNR relative to the no-cache reference $y^\star$ generated with the same prompt and random seed:
\begin{equation}
r_t=
\begin{cases}
0, & 1<t\le N,\\
\min\!\left\{\operatorname{PSNR}(y,y^\star),\,r_{\max}\right\}, & t=1,
\end{cases}
\qquad
R(\tau)=\sum_{t=1}^{N}r_t.
\label{eq:terminal-reward}
\end{equation}
where $y$ denotes the video generated under the cache schedule and $r_{\max}$ is a fixed finite cap on PSNR. The horizon $N$ is the number of denoising steps.

\begin{wrapfigure}{r}{\dimexpr0.5\linewidth-0.5\columnsep\relax}
\vspace{-12pt}
\centering
\includegraphics[width=\linewidth,keepaspectratio]{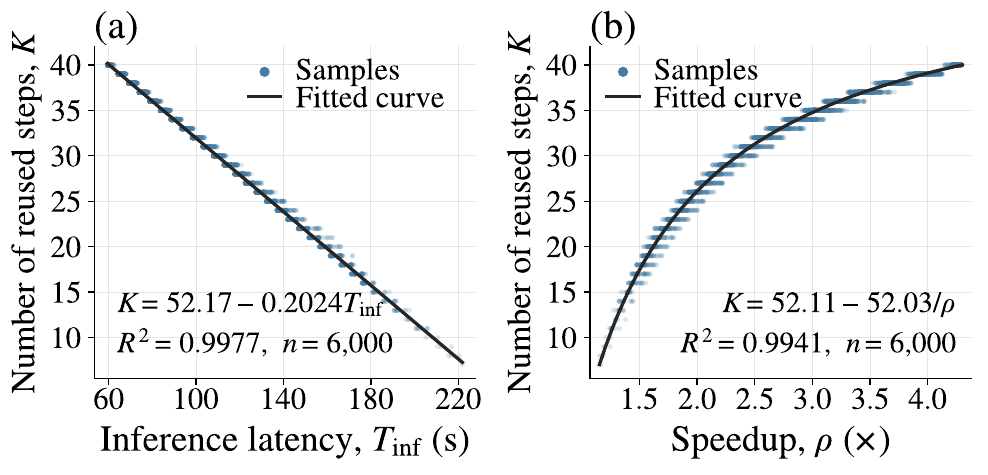}
\setlength{\abovecaptionskip}{0pt}
\caption{\textbf{\fontencoding{T1}\selectfont Calibration of reuse budget $K$ against (a) inference latency and (b) speedup on Wan2.1.} The fitted functions take the forms (a) $K=\alpha-\beta T_{\mathrm{inf}}$ and (b) $K=\alpha-\beta/\rho$.}
\label{fig:calibration}
\end{wrapfigure}

Given a target acceleration ratio $\rho^\star$, we further constrain the computation budget. Since all denoising steps share the same backbone, each reuse operation yields approximately the same computational saving. Empirically, inference latency decreases approximately linearly with the reuse count $K$ and is similar across prompts. Combined with $\rho=T_{\mathrm{full}}/T_{\mathrm{inf}}$, this yields $K\approx\alpha-\beta/\rho$ (Fig.~\ref{fig:calibration}). We therefore calibrate the target acceleration ratio to a reuse budget $K=\mathcal{C}(\rho^\star)$ and formulate cache scheduling as a constrained MDP. We seek to learn a scheduler policy $\pi$ that maximizes the expected return while satisfying the prescribed reuse budget:
\par
\makeatletter\WFclear\makeatother
\begin{equation}
\max_{\pi}\; \mathbb{E}_{\tau\sim\pi}[R(\tau)]
\qquad \mathrm{s.t.}\qquad
\sum_{t=1}^{N}a_t=K.
\label{eq:hard-budget-objective}
\end{equation}

\paragraph{Reduction to a Standard MDP.}
However, RL algorithms generally do not directly guarantee constraint satisfaction. To make the constrained scheduling problem compatible with standard RL optimization, we transform it into an equivalent augmented MDP following the reduction approach of \citet{mcmahan2024anytime}. Let $u_t=\sum_{i=t+1}^{N}a_i$ denote the number of reuse actions already taken and $b_t=K-u_t$ the remaining budget. Since the optimal decision depends on both the current denoising state and the remaining budget, we augment the state as
\begin{equation}
\bar{s}_t=(s_t,u_t,K).
\label{eq:augmented-state}
\end{equation}

We further restrict the action set to prevent decisions that make the prescribed reuse budget unattainable. After taking action $a_t$, the remaining budget $b_t-a_t$ must be achievable within the remaining $t-1$ denoising steps. We therefore define the feasible action set as
\begin{equation}
\bar{\mathcal{A}}_t(\bar{s}_t)
=
\left\{
a\in\{0,1\}
\;\middle|\;
0\le b_t-a\le t-1
\right\}.
\label{eq:safe-action-set}
\end{equation}
This construction yields a standard augmented MDP $\bar{\mathcal{M}}$ that preserves the optimal feasible value and guarantees the prescribed reuse budget. A formal proof is provided in Appendix~\ref{app:budget-proof}.

\paragraph{Latent-Aware State Representation.}
In the formulation and reduction above, $\bar{s}_t$ denotes the full denoising state needed to determine transition and reward distributions. In practice, we use a simplified representation of $\bar{s}_t$ as the scheduler input to control computational overhead. Recall from Sec.~\ref{sec:intro} that our scheduler aims to make terminal-error-guided decisions. Existing methods already provide effective proxy signals for estimating step error, but this local error does not directly correspond to terminal error (Fig.~\ref{fig:step-terminal-error}). Latent information helps bridge this gap (Table~\ref{tab:terminal-error-prediction}).

Our scheduler input therefore consists of three components. (i) Step-error proxies and related statistics: we retain signals and statistics associated with SeaCache's cache decisions, including the adjacent-step distance $d_t$, accumulated distance $D_t$, current denoising step $t$, and number of consecutive reuse steps $\ell_t=j_t-t-1$. (ii) Latent information: each latent has shape $C\times T\times H\times W$, where $C$, $T$, $H$, and $W$ denote the channel count, latent frame count, height, and width, respectively. To control computational overhead, we extract compact features instead of using the full latent. We apply 3D pooling to each latent and 2D pooling to its temporal mean and variance. Denoting these pooling operators by $\mathcal{P}_3$ and $\mathcal{P}_2$, and the temporal mean and variance by $\mu_T(x)$ and $\sigma_T^2(x)$, respectively, we define $\phi(x)=\bigl(\mathcal{P}_3(x),\;\mathcal{P}_2(\mu_T(x)),\;\mathcal{P}_2(\sigma_T^2(x))\bigr)$. We use $\phi$ to extract features from latents relevant to the current denoising state: the current latent $x_t$, the previous-step latent $x_{t+1}$, and the latent at the most recent recomputation step $x_{j_t}$. (iii) Budget information: following the reduction above, we include $u_t$ and $K$ to handle the budget constraint. Together, these components form the compact scheduler input:
\begin{equation}
z_t=\left(d_t,D_t,t,\ell_t,\phi(x_t),\phi(x_{t+1}),\phi(x_{j_t}),u_t,K\right).
\label{eq:latent-aware-scheduler-input}
\end{equation}

\subsection{Offline-to-Online Reinforcement Learning Paradigm}
\label{sec:training}

Since collecting cache-scheduling trajectories requires expensive video-generation rollouts, we adopt a two-stage offline-to-online reinforcement learning paradigm based on Implicit Q-Learning (IQL)~\citep{kostrikov2022offline} to achieve both high sample efficiency and strong performance. As an off-policy algorithm, IQL enables repeated reuse of collected data across optimization updates, unlike on-policy methods such as REINFORCE, which require fresh rollouts under the current policy. In the offline stage, the scheduler learns from a relatively small set of pre-collected random trajectories, providing an effective policy initialization. Starting from this initialization, online fine-tuning enables further exploration and policy improvement.

\paragraph{Offline Training.}
We randomly sample cache schedules under different prompts and target acceleration ratios, run the video-generation model with these schedules, and convert the resulting trajectories into transition tuples $(z_t,a_t,r_t,z_{t-1})$ to construct the training dataset $\mathcal D_{\mathrm{off}}$. Further details of dataset construction are provided in Appendix~\ref{app:dataset-preparation}.

As illustrated in Algorithm~\ref{alg:iql-training}(a), IQL maintains a value network $V_\psi$, two Q-networks $Q_{\theta_1}$ and $Q_{\theta_2}$, and their corresponding target networks $Q_{\bar\theta_1}$ and $Q_{\bar\theta_2}$. The target networks track the Q-networks through exponential moving average (EMA) updates. IQL trains the Q-functions against temporal-difference targets and learns the value function through expectile regression on the target Q-values:
\begin{equation}
\mathcal L_Q(\theta_i)
=
\mathbb E_{(z_t,a_t,r_t,z_{t-1})\sim\mathcal D}
\left[\left(r_t+\gamma V_\psi(z_{t-1})-Q_{\theta_i}(z_t,a_t)\right)^2\right],
\qquad i=1,2.
\label{eq:iql-q-loss}
\end{equation}
\begin{equation}
\mathcal L_V(\psi)
=
\mathbb E_{(z_t,a_t)\sim\mathcal D}
\left[\ell_\tau\left(\bar Q(z_t,a_t)-V_\psi(z_t)\right)\right].
\label{eq:iql-value-loss}
\end{equation}
where $\ell_\tau(u)=|\tau-\mathbb I[u<0]|u^2$ is the expectile loss, $\bar Q(z_t,a_t)=\min_{i\in\{1,2\}}Q_{\bar\theta_i}(z_t,a_t)$, and $\gamma$ is the discount factor.
The scheduler policy is then learned through advantage-weighted behavior cloning, assigning larger weights to actions with higher estimated advantages:
\begin{equation}
\mathcal L_\pi(\varphi) = -\mathbb E_{(z_t,a_t)\sim\mathcal D}
\left[
\exp\left(\beta A(z_t,a_t)\right)
\log\pi_\varphi(a_t\mid z_t)
\right],
\label{eq:iql-policy-loss}
\end{equation}
where $A(z_t,a_t)=\bar Q(z_t,a_t)-V_\psi(z_t)$, and $\beta$ controls advantage weighting. For further details of IQL, we refer readers to \citet{kostrikov2022offline}.

By learning preferentially from higher-advantage actions in a relatively small dataset of randomly sampled trajectories, the offline stage provides a good initial policy without requiring prior knowledge of effective cache schedules. In contrast, starting online training from scratch may incur substantial unnecessary data collection and training costs under an initially weak policy.

\paragraph{Online Fine-Tuning.}
Starting from the offline-trained scheduler, we perform online fine-tuning as shown in Algorithm~\ref{alg:iql-training}(b). In each round, we collect a batch of online trajectories under randomly sampled prompts and target acceleration ratios, sampling each step's cache decision from the current scheduler policy $\pi_\varphi(\cdot\mid z_t)$. These trajectories are added to the replay buffer alongside the offline and previously collected online data. We first perform a short critic-only warm-up with the policy fixed to adapt the Q- and V-networks to the new data, then jointly train the value functions and policy using the same IQL procedure as in offline training. Sampling trajectories from the current policy enables further exploration, while retaining the offline data in the replay buffer helps stabilize learning during fine-tuning.

\begin{algorithm}[t]
\caption{Offline-to-Online IQL Training for MORCA}
\label{alg:iql-training}
\small
\noindent
\begin{minipage}[t]{0.48\linewidth}
\textbf{(a) Offline training}
\begin{algorithmic}[1]
\State Randomly initialize $\psi,\theta_1,\theta_2,\varphi$
\State $\bar\theta_i\gets\theta_i,\quad i=1,2$
\State $\mathcal D\gets\mathcal D_{\mathrm{off}}$
\For{each gradient step}
    \State \Call{CriticUpdate}{}
    \State \Call{ActorUpdate}{}
\EndFor
\end{algorithmic}
\par\medskip\hrule\smallskip
\textbf{Shared updates}
\begin{algorithmic}
\Procedure{CriticUpdate}{}
    \State $\psi\gets\psi-\lambda_V\nabla_\psi\mathcal L_V(\psi)$
    \State $\theta_i\gets\theta_i-\lambda_Q\nabla_{\theta_i}\mathcal L_Q(\theta_i),\quad i=1,2$
    \State $\bar\theta_i\gets\eta\bar\theta_i+(1-\eta)\theta_i,\quad i=1,2$
\EndProcedure
\Procedure{ActorUpdate}{}
    \State $\varphi\gets\varphi-\lambda_\pi\nabla_\varphi\mathcal L_\pi(\varphi)$
\EndProcedure
\end{algorithmic}
\end{minipage}\hfill
\begin{minipage}[t]{0.48\linewidth}
\textbf{(b) Online fine-tuning}
\begin{algorithmic}[1]
\State $\mathcal D\gets\mathcal D_{\mathrm{off}}$
\State Initialize $\psi,\theta_1,\theta_2,\bar\theta_1,\bar\theta_2,\varphi$ from the offline training results
\For{$k=1,\ldots,N_{\mathrm{on}}$}
    \State Collect new data $\Delta\mathcal D$ using $\pi_\varphi$
    \State $\mathcal D\gets\mathcal D\cup\Delta\mathcal D$
    \For{each gradient step}\Comment{warm-up}
        \State \Call{CriticUpdate}{}
    \EndFor
    \For{each gradient step}\Comment{joint}
        \State \Call{CriticUpdate}{}
        \State \Call{ActorUpdate}{}
    \EndFor
\EndFor
\State \Return $\pi_\varphi$
\end{algorithmic}
\end{minipage}

\par\smallskip\hrule\smallskip\noindent\footnotesize
$N_{\mathrm{on}}$ is the number of online rounds; $\lambda_V,\lambda_Q,\lambda_\pi$ are learning rates; $\eta$ is the target-network EMA decay.
\end{algorithm}

\section{Experiments}

\subsection{Experimental Settings}

{\fontencoding{T1}\selectfont\bfseries Model configurations.} We evaluate two pretrained text-to-video models from the Wan family~\citep{wan2025wan}: Wan2.2-T2V-A14B on NVIDIA RTX A6000 GPUs and Wan2.1-T2V-1.3B on NVIDIA GeForce RTX 4090 GPUs. Both use 50 denoising steps; other inference settings are provided in Appendix~\ref{app:model-configurations}.

\label{sec:training-configurations}
{\fontencoding{T1}\selectfont\bfseries Training configurations.} We collect 2,500 offline trajectories from 100 prompts for Wan2.2 (25 schedules per prompt), with target speedups in $[1.5\times,3.5\times]$, and 4,000 from 1,000 prompts for Wan2.1 (four schedules per prompt), with target speedups in $[2.0\times,4.5\times]$. Offline IQL runs for 200 epochs with a learning rate of $10^{-4}$ for all networks. Online fine-tuning runs for 16 rounds, each adding 100 trajectories and performing five epochs of critic-only warm-up followed by 20 epochs of joint training, with learning rates of $4\times10^{-5}$ for the policy network and $10^{-4}$ for the Q and V networks. Both stages use the AdamW optimizer and a batch size of 256. Further details are provided in Appendix~\ref{app:training-details}.

{\fontencoding{T1}\selectfont\bfseries Baseline configurations.} We compare against four representative caching-based acceleration methods: TeaCache~\citep{liu2025teacache}, MagCache~\citep{ma2025magcache}, DiCache~\citep{bu2026dicache}, and SeaCache~\citep{chung2026seacache}. For Wan2.2, only MagCache has an official implementation; we port the others from Wan2.1 with model-specific adaptations. We use their official Wan2.1 implementations directly. Implementation details are provided in Appendix~\ref{app:baseline-details}.

{\fontencoding{T1}\selectfont\bfseries Evaluation protocol.} We sample 200 prompts from VBench~\citep{huang2024vbench} as a fixed evaluation set and measure visual quality using PSNR, SSIM, and LPIPS against no-cache references generated with the same prompts and random seeds. We report inference latency. We compare methods at approximately $1.8\times$, $2.4\times$, and $3.0\times$ speedup for Wan2.2, and $2.4\times$, $3.0\times$, and $4.0\times$ for Wan2.1. We use lower target speedups for Wan2.2 to preserve generation fidelity, as it is more sensitive to cache reuse.

\subsection{Quantitative Comparison}
\label{sec:quantitative-comparison}

\begin{table}[t]
\fontencoding{T1}\selectfont
\setlength{\belowcaptionskip}{5pt}
\caption{\textbf{\fontencoding{T1}\selectfont Quantitative comparison with baselines.} The best result is highlighted in bold, while the second-best result is underlined.}
\label{tab:quality-main}
\centering
\fontsize{8}{9.2}\selectfont %
\setlength{\tabcolsep}{8pt}
\renewcommand{\arraystretch}{1}
\setlength{\aboverulesep}{1.5pt}
\setlength{\belowrulesep}{1.5pt}
\begin{tabular}{cccccccc}
\toprule[0.7pt]
\textbf{Model} & \textbf{Target} & \textbf{Method} & \mbox{\textbf{Latency} (s)}$\downarrow$ & \textbf{Speedup}$\uparrow$ & \textbf{PSNR}$\uparrow$ & \textbf{SSIM}$\uparrow$ & \textbf{LPIPS}$\downarrow$ \\
\midrule[0.35pt]
\multirow{16}{*}[-4pt]{Wan2.2} & --- & Original & 990.56 & $1.00\times$ & --- & --- & --- \\
\cmidrule[0.3pt](lr){2-8}
 & \multirow{5}{*}{$1.8\times$} & TeaCache & 562.63 & $1.76\times$ & 17.04 & 0.5984 & 0.2882 \\
 &  & MagCache & 544.78 & $1.82\times$ & 16.41 & 0.5662 & 0.3041 \\
 &  & DiCache & 551.29 & $1.80\times$ & 18.51 & 0.6516 & 0.2232 \\
 &  & SeaCache & 546.99 & $1.81\times$ & \underline{19.91} & \underline{0.7016} & \underline{0.1888} \\
 &  & \cellcolor{black!5}\textbf{Ours} & \cellcolor{black!5}549.19 & \cellcolor{black!5}$1.80\times$ & \cellcolor{black!5}\textbf{22.13} & \cellcolor{black!5}\textbf{0.7507} & \cellcolor{black!5}\textbf{0.1681} \\
\cmidrule[0.3pt](lr){2-8}
 & \multirow{5}{*}{$2.4\times$} & TeaCache & 415.21 & $2.39\times$ & 15.31 & 0.5325 & 0.3658 \\
 &  & MagCache & 417.22 & $2.37\times$ & 15.31 & 0.5185 & 0.3632 \\
 &  & DiCache & 416.43 & $2.38\times$ & 16.51 & 0.5759 & 0.2923 \\
 &  & SeaCache & 419.76 & $2.36\times$ & \underline{17.38} & \underline{0.6074} & \underline{0.2840} \\
 &  & \cellcolor{black!5}\textbf{Ours} & \cellcolor{black!5}421.15 & \cellcolor{black!5}$2.35\times$ & \cellcolor{black!5}\textbf{19.24} & \cellcolor{black!5}\textbf{0.6516} & \cellcolor{black!5}\textbf{0.2513} \\
\cmidrule[0.3pt](lr){2-8}
 & \multirow{5}{*}{$3.0\times$} & TeaCache & 322.88 & $3.07\times$ & 14.40 & 0.4913 & 0.4211 \\
 &  & MagCache & 332.74 & $2.98\times$ & 13.41 & 0.4336 & 0.4839 \\
 &  & DiCache & 340.50 & $2.91\times$ & \underline{15.54} & \underline{0.5259} & \underline{0.3623} \\
 &  & SeaCache & 320.41 & $3.09\times$ & 14.99 & 0.5151 & 0.3941 \\
 &  & \cellcolor{black!5}\textbf{Ours} & \cellcolor{black!5}326.93 & \cellcolor{black!5}$3.03\times$ & \cellcolor{black!5}\textbf{16.77} & \cellcolor{black!5}\textbf{0.5639} & \cellcolor{black!5}\textbf{0.3435} \\
\midrule[0.5pt]
\multirow{16}{*}[-4pt]{Wan2.1} & --- & Original & 255.94 & $1.00\times$ & --- & --- & --- \\
\cmidrule[0.3pt](lr){2-8}
 & \multirow{5}{*}{$2.4\times$} & TeaCache & 106.82 & $2.40\times$ & 20.12 & 0.7361 & 0.1596 \\
 &  & MagCache & 107.54 & $2.38\times$ & 22.31 & 0.8075 & 0.1198 \\
 &  & DiCache & 108.27 & $2.36\times$ & 22.28 & 0.8139 & \textbf{0.0992} \\
 &  & SeaCache & 109.00 & $2.35\times$ & \underline{22.86} & \underline{0.8174} & 0.1131 \\
 &  & \cellcolor{black!5}\textbf{Ours} & \cellcolor{black!5}109.68 & \cellcolor{black!5}$2.33\times$ & \cellcolor{black!5}\textbf{23.91} & \cellcolor{black!5}\textbf{0.8279} & \cellcolor{black!5}\underline{0.1073} \\
\cmidrule[0.3pt](lr){2-8}
 & \multirow{5}{*}{$3.0\times$} & TeaCache & 87.71 & $2.92\times$ & 17.39 & 0.6199 & 0.2674 \\
 &  & MagCache & 86.09 & $2.97\times$ & \underline{20.04} & 0.6888 & 0.1910 \\
 &  & DiCache & 86.82 & $2.95\times$ & 19.69 & \underline{0.7170} & \underline{0.1813} \\
 &  & SeaCache & 84.66 & $3.02\times$ & 18.78 & 0.6710 & 0.2226 \\
 &  & \cellcolor{black!5}\textbf{Ours} & \cellcolor{black!5}84.98 & \cellcolor{black!5}$3.01\times$ & \cellcolor{black!5}\textbf{20.95} & \cellcolor{black!5}\textbf{0.7244} & \cellcolor{black!5}\textbf{0.1745} \\
\cmidrule[0.3pt](lr){2-8}
 & \multirow{5}{*}{$4.0\times$} & TeaCache & 64.16 & $3.99\times$ & 14.09 & 0.4625 & 0.4554 \\
 &  & MagCache & 64.09 & $3.99\times$ & 17.03 & 0.4974 & 0.3917 \\
 &  & DiCache & 64.24 & $3.98\times$ & 14.68 & 0.4611 & 0.4198 \\
 &  & SeaCache & 64.91 & $3.94\times$ & \underline{17.13} & \underline{0.5840} & \underline{0.3025} \\
 &  & \cellcolor{black!5}\textbf{Ours} & \cellcolor{black!5}65.49 & \cellcolor{black!5}$3.91\times$ & \cellcolor{black!5}\textbf{18.24} & \cellcolor{black!5}\textbf{0.5891} & \cellcolor{black!5}\textbf{0.2898} \\
\bottomrule[0.7pt]
\end{tabular}
\end{table}

Table~\ref{tab:quality-main} compares MORCA with existing caching methods at comparable speedups. Since MORCA builds on SeaCache by replacing its threshold-based scheduling with a learned scheduler, we focus on this comparison to demonstrate the effectiveness of our learned scheduling policy. On Wan2.2, MORCA achieves the best PSNR, SSIM, and LPIPS at all three target speedups, with PSNR gains over SeaCache of 2.22, 1.86, and 1.78 dB at $1.8\times$, $2.4\times$, and $3.0\times$, respectively. On Wan2.1, it achieves the highest PSNR and SSIM at all three targets, improving PSNR over SeaCache by 1.05, 2.17, and 1.11 dB at $2.4\times$, $3.0\times$, and $4.0\times$, respectively. LPIPS also improves over this baseline at all three settings and is best among all methods at $3.0\times$ and $4.0\times$. Wan2.2 generally exhibits lower fidelity than Wan2.1 at comparable speedups, suggesting greater sensitivity to cache reuse (Sec.~\ref{sec:qualitative-comparison}).

\FloatBarrier %
\subsection{Qualitative Comparison}
\label{sec:qualitative-comparison}

Fig.~\ref{fig:qualitative} presents visual comparisons with existing caching methods. As shown, MORCA better preserves the original structure while retaining fine details, as highlighted in the boxed regions. In particular, we observe that the spatial structure of videos generated by Wan2.2 is more sensitive to cache reuse, leading to relatively lower fidelity metrics. Compared with other methods, MORCA is notably better at preserving the original structure.

\subsection{Ablation And Analysis}
\label{sec:additional-analysis}

\begingroup
\setlength{\parskip}{12.75pt}
\noindent\textbf{Target Speedup Control.}\quad
In Table~\ref{tab:quality-main}, our method achieves mean absolute speedup errors of 0.03 and 0.05, and mean absolute relative inference-time errors of 1.15\% and 1.55\% on Wan2.2 and Wan2.1, respectively. Note that the number of reuse steps is discrete, so attainable acceleration ratios are also discrete, resulting in unavoidable error in speedup control.

\noindent\textbf{Ablation on Latent-Aware State Design.}\quad
Table~\ref{tab:feature-ablation} compares scheduler performance with and without latent features in the state representation. The results show that incorporating latent features enables better scheduling decisions.

\begin{figure}[t]
\centering
\includegraphics[width=\linewidth]{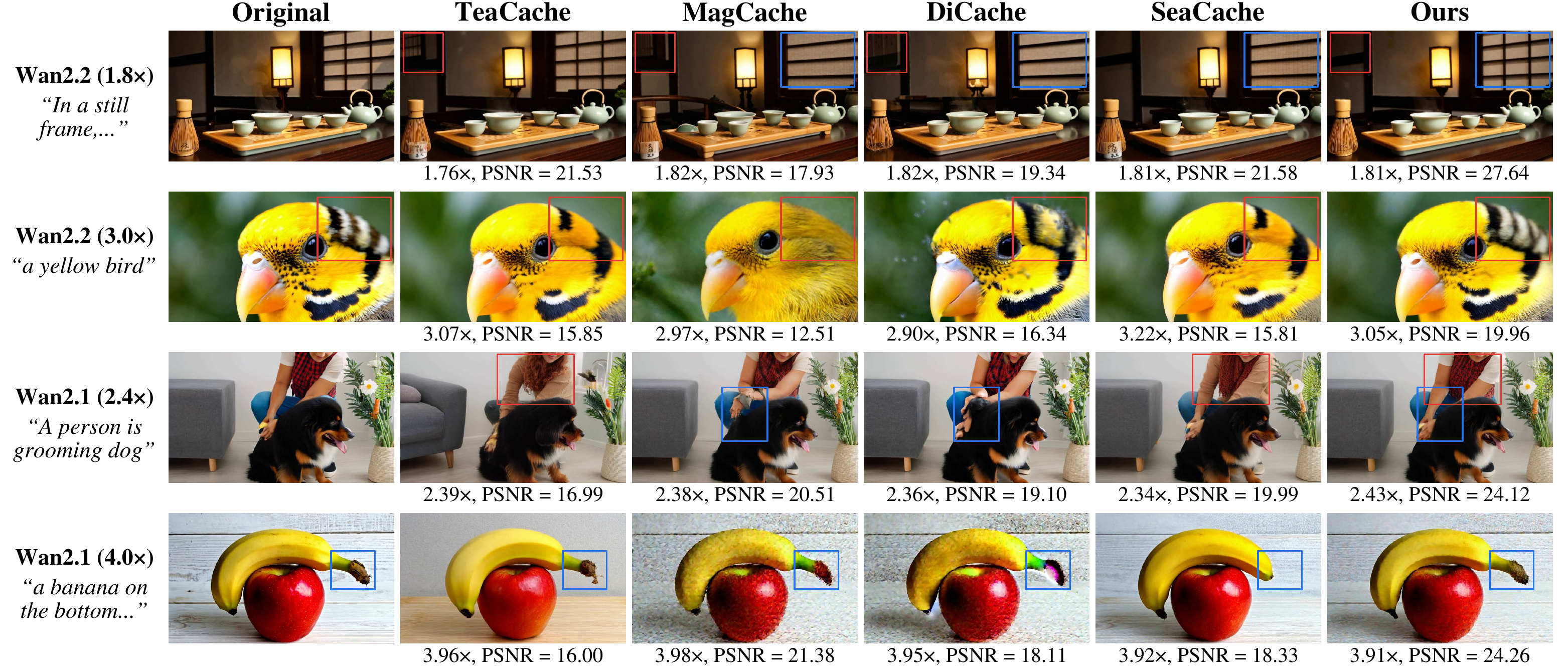}
\caption{\textbf{\fontencoding{T1}\selectfont Qualitative comparison with baselines.} Differences are highlighted in the boxed regions.}
\label{fig:qualitative}
\end{figure}

\begin{table}[t]
\fontencoding{T1}\selectfont
\setlength{\belowcaptionskip}{5pt}
\caption{\textbf{Ablation on Latent-Aware State Design.} Offline training results on Wan2.1 at approximately $3.0\times$ speedup.}
\label{tab:feature-ablation}
\centering
\fontsize{8}{9.2}\selectfont %
\setlength{\tabcolsep}{4pt}
\renewcommand{\arraystretch}{1.1}
\begin{tabular}{lcccccccc}
\toprule[0.7pt]
\textbf{Variant} & $\mathcal{P}_3(x)$ & $\mathcal{P}_2(\mu_T(x))$ & $\mathcal{P}_2(\sigma_T^2(x))$ & \mbox{\textbf{Latency} (s)}$\downarrow$ & \textbf{Speedup}$\uparrow$ & \textbf{PSNR}$\uparrow$ & \textbf{SSIM}$\uparrow$ & \textbf{LPIPS}$\downarrow$ \\
\midrule[0.35pt]
$A_0$ & $\cdot$ & $\cdot$ & $\cdot$ & 85.46 & $2.99\times$ & 19.93 & 0.6420 & 0.2240 \\
$A_1$ & $\checkmark$ & $\cdot$ & $\cdot$ & 86.10 & $2.97\times$ & 19.92 & 0.6515 & 0.2273 \\
$A_2$ & $\cdot$ & $\checkmark$ & $\cdot$ & 86.89 & $2.95\times$ & 19.76 & 0.6429 & 0.2314 \\
$A_3$ & $\cdot$ & $\cdot$ & $\checkmark$ & 85.39 & $3.00\times$ & 19.27 & 0.6151 & 0.2441 \\
$A_4$ & $\checkmark$ & $\checkmark$ & $\cdot$ & 85.38 & $3.00\times$ & 20.06 & 0.6598 & 0.2197 \\
$A_5$ & $\checkmark$ & $\cdot$ & $\checkmark$ & 85.29 & $3.00\times$ & 19.83 & 0.6484 & 0.2320 \\
$A_6$ & $\cdot$ & $\checkmark$ & $\checkmark$ & 85.12 & $3.01\times$ & 19.71 & 0.6402 & 0.2362 \\
Ours & $\checkmark$ & $\checkmark$ & $\checkmark$ & 85.27 & $3.00\times$ & \textbf{20.38} & \textbf{0.6873} & \textbf{0.2036} \\
\bottomrule[0.7pt]
\end{tabular}
\end{table}

\vspace{1pt}
\noindent\begin{minipage}{\linewidth}
\sbox{\trainingtablebox}{%
\fontencoding{T1}\fontsize{8}{9.2}\selectfont%
\setlength{\tabcolsep}{2pt}%
\renewcommand{\arraystretch}{1}%
\begin{tabular}{@{}cccccc@{}}
\toprule[0.7pt]
\textbf{Variant} & \textbf{Latency} (s)$\downarrow$ & \textbf{Speedup}$\uparrow$ & \textbf{PSNR}$\uparrow$ & \textbf{SSIM}$\uparrow$ & \textbf{LPIPS}$\downarrow$ \\
\midrule[0.35pt]
\multicolumn{6}{c}{\textit{(i) IQL vs. REINFORCE}} \\
REINFORCE & 84.66 & $3.02\times$ & 18.97 & 0.6251 & 0.2506 \\
IQL (offline) & 85.27 & $3.00\times$ & \textbf{20.38} & \textbf{0.6873} & \textbf{0.2036} \\
\midrule[0.35pt]
\multicolumn{6}{c}{\textit{(ii) Offline-only vs. online-only}} \\
Online-only & 84.65 & $3.02\times$ & 18.84 & 0.5845 & 0.3212 \\
Offline-only & 85.27 & $3.00\times$ & \textbf{20.38} & \textbf{0.6873} & \textbf{0.2036} \\
\midrule[0.35pt]
\multicolumn{6}{c}{\textit{(iii) Offline-only vs. offline-to-online}} \\
Offline-only & 85.27 & $3.00\times$ & 20.38 & 0.6873 & 0.2036 \\
Offline-to-online & 84.98 & $3.01\times$ & \textbf{20.95} & \textbf{0.7244} & \textbf{0.1745} \\
\bottomrule[0.7pt]
\end{tabular}}
\begin{minipage}[t]{\dimexpr\linewidth-\wd\trainingtablebox-\columnsep\relax}
\vspace{0pt}
\noindent\textbf{Ablation on the Training Paradigm.}\quad
Table~\ref{tab:training-paradigm} examines our training design through three comparisons. With comparable amounts of training data, (i) IQL outperforms REINFORCE, demonstrating the sample efficiency advantage of off-policy learning; and (ii) offline IQL outperforms online IQL trained from scratch, supporting the effectiveness of offline initialization under a limited data budget. Finally, (iii) online fine-tuning further improves the offline-initialized scheduler.
\par
\end{minipage}\hfill
\begin{minipage}[t]{\wd\trainingtablebox}
\vspace{0pt}
\setlength{\abovecaptionskip}{0pt}
\fontencoding{T1}\selectfont
\setlength{\belowcaptionskip}{5pt}
\captionof{table}{\textbf{Ablation on the Training Paradigm.} Results on Wan2.1 at approximately $3.0\times$ speedup.}
\label{tab:training-paradigm}
\centering
\fontsize{8}{9.2}\selectfont %
\setlength{\tabcolsep}{2pt}
\renewcommand{\arraystretch}{1}
\usebox{\trainingtablebox}

\end{minipage}
\end{minipage}\par
\endgroup

\section{Conclusion}

We presented MORCA, a cache scheduling framework for accelerating {video diffusion models}. Motivated by the gap between step error and terminal error that existing methods do not adequately address, along with their limited control over acceleration ratios, we formulated cache scheduling as an MDP, designed a budget-constrained and latent-aware scheduler {for terminal-error-guided decisions under user-specified acceleration targets}, and optimized it through offline-to-online IQL training. Extensive experiments on different video generation models demonstrate that MORCA achieves better {speed--quality trade-offs} than state-of-the-art caching methods. As a schedule-level approach, MORCA can potentially {be combined with different caching granularities and cache reuse strategies for further gains}. Exploring these combinations is a promising direction for future work.

\bibliography{iclr2027_conference}

@InProceedings{peebles2023scalable,
    author    = {Peebles, William and Xie, Saining},
    title     = {Scalable Diffusion Models with Transformers},
    booktitle = {Proceedings of the IEEE/CVF International Conference on Computer Vision (ICCV)},
    month     = {October},
    year      = {2023},
    pages     = {4195-4205}
}

@misc{kong2024hunyuanvideo,
      title={HunyuanVideo: A Systematic Framework For Large Video Generative Models}, 
      author={Weijie Kong and Qi Tian and Zijian Zhang and Rox Min and Zuozhuo Dai and Jin Zhou and Jiangfeng Xiong and Xin Li and Bo Wu and Jianwei Zhang and Kathrina Wu and Qin Lin and Junkun Yuan and Yanxin Long and Aladdin Wang and Andong Wang and Changlin Li and Duojun Huang and Fang Yang and Hao Tan and Hongmei Wang and Jacob Song and Jiawang Bai and Jianbing Wu and Jinbao Xue and Joey Wang and Kai Wang and Mengyang Liu and Pengyu Li and Shuai Li and Weiyan Wang and Wenqing Yu and Xinchi Deng and Yang Li and Yi Chen and Yutao Cui and Yuanbo Peng and Zhentao Yu and Zhiyu He and Zhiyong Xu and Zixiang Zhou and Zunnan Xu and Yangyu Tao and Qinglin Lu and Songtao Liu and Dax Zhou and Hongfa Wang and Yong Yang and Di Wang and Yuhong Liu and Jie Jiang and Caesar Zhong},
      year={2025},
      eprint={2412.03603},
      archivePrefix={arXiv},
      primaryClass={cs.CV},
      url={https://arxiv.org/abs/2412.03603}, 
}

@misc{wan2025wan,
      title={Wan: Open and Advanced Large-Scale Video Generative Models}, 
      author={Team Wan and Ang Wang and Baole Ai and Bin Wen and Chaojie Mao and Chen-Wei Xie and Di Chen and Feiwu Yu and Haiming Zhao and Jianxiao Yang and Jianyuan Zeng and Jiayu Wang and Jingfeng Zhang and Jingren Zhou and Jinkai Wang and Jixuan Chen and Kai Zhu and Kang Zhao and Keyu Yan and Lianghua Huang and Mengyang Feng and Ningyi Zhang and Pandeng Li and Pingyu Wu and Ruihang Chu and Ruili Feng and Shiwei Zhang and Siyang Sun and Tao Fang and Tianxing Wang and Tianyi Gui and Tingyu Weng and Tong Shen and Wei Lin and Wei Wang and Wei Wang and Wenmeng Zhou and Wente Wang and Wenting Shen and Wenyuan Yu and Xianzhong Shi and Xiaoming Huang and Xin Xu and Yan Kou and Yangyu Lv and Yifei Li and Yijing Liu and Yiming Wang and Yingya Zhang and Yitong Huang and Yong Li and You Wu and Yu Liu and Yulin Pan and Yun Zheng and Yuntao Hong and Yupeng Shi and Yutong Feng and Zeyinzi Jiang and Zhen Han and Zhi-Fan Wu and Ziyu Liu},
      year={2025},
      eprint={2503.20314},
      archivePrefix={arXiv},
      primaryClass={cs.CV},
      url={https://arxiv.org/abs/2503.20314}, 
}

@inproceedings{zhao2025realtime,
 author = {Zhao, Xuanlei and Jin, Xiaolong and Wang, Kai and You, Yang},
 booktitle = {International Conference on Learning Representations},
 editor = {Y. Yue and A. Garg and N. Peng and F. Sha and R. Yu},
 pages = {3296--3319},
 title = {Real-Time Video Generation with Pyramid Attention Broadcast},
 url = {https://proceedings.iclr.cc/paper_files/paper/2025/file/092c2d45005ea2db40fc24c470663416-Paper-Conference.pdf},
 volume = {2025},
 year = {2025}
}

@InProceedings{liu2025teacache,
    author    = {Liu, Feng and Zhang, Shiwei and Wang, Xiaofeng and Wei, Yujie and Qiu, Haonan and Zhao, Yuzhong and Zhang, Yingya and Ye, Qixiang and Wan, Fang},
    title     = {Timestep Embedding Tells: It's Time to Cache for Video Diffusion Model},
    booktitle = {Proceedings of the IEEE/CVF Conference on Computer Vision and Pattern Recognition (CVPR)},
    month     = {June},
    year      = {2025},
    pages     = {7353-7363}
}

@InProceedings{kahatapitiya2025adacache,
    author    = {Kahatapitiya, Kumara and Liu, Haozhe and He, Sen and Liu, Ding and Jia, Menglin and Zhang, Chenyang and Ryoo, Michael S. and Xie, Tian},
    title     = {Adaptive Caching for Faster Video Generation with Diffusion Transformers},
    booktitle = {Proceedings of the IEEE/CVF International Conference on Computer Vision (ICCV)},
    month     = {October},
    year      = {2025},
    pages     = {15240-15252}
}

@InProceedings{chung2026seacache,
    author    = {Chung, Jiwoo and Hyun, Sangeek and Lee, MinKyu and Han, Byeongju and Cha, Geonho and Wee, Dongyoon and Hong, Youngjun and Heo, Jae-Pil},
    title     = {SeaCache: Spectral-Evolution-Aware Cache for Accelerating Diffusion Models},
    booktitle = {Proceedings of the IEEE/CVF Conference on Computer Vision and Pattern Recognition (CVPR)},
    month     = {June},
    year      = {2026},
    pages     = {14283-14294}
}

@inproceedings{lipman2023flow,
  title = {Flow Matching for Generative Modeling},
  author = {Yaron Lipman and Ricky T. Q. Chen and Heli Ben-Hamu and Maximilian Nickel and Matthew Le},
  booktitle = {International Conference on Learning Representations},
  year = {2023},
  url = {https://openreview.net/forum?id=PqvMRDCJT9t}
}

@inproceedings{liu2023flow,
  title = {Flow Straight and Fast: Learning to Generate and Transfer Data with Rectified Flow},
  author = {Xingchao Liu and Chengyue Gong and Qiang Liu},
  booktitle = {International Conference on Learning Representations},
  year = {2023},
  url = {https://openreview.net/forum?id=XVjTT1nw5z}
}

@InProceedings{ma2024deepcache,
    author    = {Ma, Xinyin and Fang, Gongfan and Wang, Xinchao},
    title     = {DeepCache: Accelerating Diffusion Models for Free},
    booktitle = {Proceedings of the IEEE/CVF Conference on Computer Vision and Pattern Recognition (CVPR)},
    month     = {June},
    year      = {2024},
    pages     = {15762-15772}
}

@inproceedings{li2024fasterdiffusion,
 author = {Li, Senmao and Hu, Taihang and van de Weijer, Joost and Khan, Fahad Shahbaz and Liu, Tao and Li, Linxuan and Yang, Shiqi and Wang, Yaxing and Cheng, Ming-Ming and Yang, Jian},
 booktitle = {Advances in Neural Information Processing Systems},
 doi = {10.52202/079017-2706},
 editor = {A. Globerson and L. Mackey and D. Belgrave and A. Fan and U. Paquet and J. Tomczak and C. Zhang},
 pages = {85203--85240},
 publisher = {Curran Associates, Inc.},
 title = {Faster Diffusion: Rethinking the Role of the Encoder for Diffusion Model Inference},
 url = {https://proceedings.neurips.cc/paper_files/paper/2024/file/9ad996b5c45130de2bc00b60d8607904-Paper-Conference.pdf},
 volume = {37},
 year = {2024}
}

@inproceedings{lv2024fastercache,
 author = {Lyu, Zhengyao and Si, Chenyang and Song, Junhao and Yang, Zhenyu and Qiao, Yu and Liu, Ziwei and Wong, Kwan-Yee K},
 booktitle = {International Conference on Learning Representations},
 editor = {Y. Yue and A. Garg and N. Peng and F. Sha and R. Yu},
 pages = {33132--33156},
 title = {FasterCache: Training-Free Video Diffusion Model Acceleration with High Quality},
 url = {https://proceedings.iclr.cc/paper_files/paper/2025/file/518046d86bbc41a0707727c38301ad8e-Paper-Conference.pdf},
 volume = {2025},
 year = {2025}
}

@InProceedings{liu2025taylorseer,
    author    = {Liu, Jiacheng and Zou, Chang and Lyu, Yuanhuiyi and Chen, Junjie and Zhang, Linfeng},
    title     = {From Reusing to Forecasting: Accelerating Diffusion Models with TaylorSeers},
    booktitle = {Proceedings of the IEEE/CVF International Conference on Computer Vision (ICCV)},
    month     = {October},
    year      = {2025},
    pages     = {15853-15863}
}

@inproceedings{ma2025magcache,
 author = {Ma, Zehong and Wei, Longhui and Wang, Feng and Zhang, Shiliang and Tian, Qi},
 booktitle = {Advances in Neural Information Processing Systems},
 doi = {10.52202/085713-1153},
 editor = {D. Belgrave and C. Zhang and H. Lin and R. Pascanu and P. Koniusz and M. Ghassemi and N. Chen},
 pages = {34348--34380},
 publisher = {Curran Associates, Inc.},
 title = {MagCache: Fast Video Generation with Magnitude-Aware Cache},
 url = {https://proceedings.neurips.cc/paper_files/paper/2025/file/311207bb626e36a8f1d3eb92aa67af22-Paper-Conference.pdf},
 volume = {38, Main Conference},
 year = {2025}
}

@inproceedings{bu2026dicache,
 author = {Bu, Jiazi and Ling, Pengyang and Zhou, Yujie and Wang, Yibin and Zang, Yuhang and Lin, Dahua and Wang, Jiaqi},
 booktitle = {International Conference on Learning Representations},
 editor = {C. Vondrick and B. Hariharan and C. Raffel and L. Pinto and D. Yang and A. Faust},
 pages = {73778--73803},
 title = {DiCache: Let Diffusion Model Determine Its Own Cache},
 url = {https://proceedings.iclr.cc/paper_files/paper/2026/file/78288ef33b18a351c3cd679dc9a15c8d-Paper-Conference.pdf},
 volume = {2026},
 year = {2026}
}

@InProceedings{wimbauer2024cacheme,
    author    = {Wimbauer, Felix and Wu, Bichen and Schoenfeld, Edgar and Dai, Xiaoliang and Hou, Ji and He, Zijian and Sanakoyeu, Artsiom and Zhang, Peizhao and Tsai, Sam and Kohler, Jonas and Rupprecht, Christian and Cremers, Daniel and Vajda, Peter and Wang, Jialiang},
    title     = {Cache Me if You Can: Accelerating Diffusion Models through Block Caching},
    booktitle = {Proceedings of the IEEE/CVF Conference on Computer Vision and Pattern Recognition (CVPR)},
    month     = {June},
    year      = {2024},
    pages     = {6211-6220}
}

@misc{chen2024deltadit,
      title={$\Delta$-DiT: A Training-Free Acceleration Method Tailored for Diffusion Transformers}, 
      author={Pengtao Chen and Mingzhu Shen and Peng Ye and Jianjian Cao and Chongjun Tu and Christos-Savvas Bouganis and Yiren Zhao and Tao Chen},
      year={2024},
      eprint={2406.01125},
      archivePrefix={arXiv},
      primaryClass={cs.CV},
      url={https://arxiv.org/abs/2406.01125}, 
}

@inproceedings{zou2025toca,
 author = {Zou, Chang and Liu, Xuyang and Liu, Ting and Huang, Siteng and Zhang, Linfeng},
 booktitle = {International Conference on Learning Representations},
 editor = {Y. Yue and A. Garg and N. Peng and F. Sha and R. Yu},
 pages = {75467--75488},
 title = {Accelerating Diffusion Transformers with Token-wise Feature Caching},
 url = {https://proceedings.iclr.cc/paper_files/paper/2025/file/bbe024e0517fe12ac3a8d388b19ff9fe-Paper-Conference.pdf},
 volume = {2025},
 year = {2025}
}

@inproceedings{zhao2026rapid3,
 author = {Zhao, Wangbo and Han, Yizeng and Tang, Zhiwei and Tang, Jiasheng and Zhou, Pengfei and Wang, Kai and Zhuang, Bohan and Wang, Zhangyang and Wang, Fan and You, Yang},
 booktitle = {International Conference on Learning Representations},
 editor = {C. Vondrick and B. Hariharan and C. Raffel and L. Pinto and D. Yang and A. Faust},
 pages = {149423--149446},
 title = {RAPID\^{}3: Tri-Level Reinforced Acceleration Policies for Diffusion Transformer},
 url = {https://proceedings.iclr.cc/paper_files/paper/2026/file/f1cf02ce09757f57c3b93c0db83181e0-Paper-Conference.pdf},
 volume = {2026},
 year = {2026}
}

@misc{xie2026onlinecache,
      title={OnlineCache: Learning Dynamic Caching Policies with Error Correction for Efficient Diffusion Inference}, 
      author={Zhikang Xie and Xichen Ye and Yifan Wu and Haoshen Yu and Li chenan and Peizhu Gong and Weizhong Zhang and Cheng Jin},
      year={2026},
      eprint={2607.29398},
      archivePrefix={arXiv},
      primaryClass={cs.LG},
      url={https://arxiv.org/abs/2607.29398}, 
}

@misc{zhao2026bag,
      title={BAG: Budget-Aware Gating for Diffusion Caching}, 
      author={Tong Zhao and Mingkun Lei and Yucheng Han and Chi Zhang},
      year={2026},
      eprint={2608.09231},
      archivePrefix={arXiv},
      primaryClass={cs.CV},
      url={https://arxiv.org/abs/2608.09231}, 
}

@InProceedings{mcmahan2024anytime,
  title = 	 {Anytime-Constrained Reinforcement Learning},
  author =       {McMahan, Jeremy and Zhu, Xiaojin},
  booktitle = 	 {Proceedings of The 27th International Conference on Artificial Intelligence and Statistics},
  pages = 	 {4321--4329},
  year = 	 {2024},
  editor = 	 {Dasgupta, Sanjoy and Mandt, Stephan and Li, Yingzhen},
  volume = 	 {238},
  series = 	 {Proceedings of Machine Learning Research},
  month = 	 {02--04 May},
  publisher =    {PMLR},
  url = 	 {https://proceedings.mlr.press/v238/mcmahan24a.html}
}

@inproceedings{kostrikov2022offline,
  title = {Offline Reinforcement Learning with Implicit Q-Learning},
  author = {Ilya Kostrikov and Ashvin Nair and Sergey Levine},
  booktitle = {International Conference on Learning Representations},
  year = {2022},
  url = {https://openreview.net/forum?id=68n2s9ZJWF8}
}

@InProceedings{huang2024vbench,
    author    = {Huang, Ziqi and He, Yinan and Yu, Jiashuo and Zhang, Fan and Si, Chenyang and Jiang, Yuming and Zhang, Yuanhan and Wu, Tianxing and Jin, Qingyang and Chanpaisit, Nattapol and Wang, Yaohui and Chen, Xinyuan and Wang, Limin and Lin, Dahua and Qiao, Yu and Liu, Ziwei},
    title     = {VBench: Comprehensive Benchmark Suite for Video Generative Models},
    booktitle = {Proceedings of the IEEE/CVF Conference on Computer Vision and Pattern Recognition (CVPR)},
    month     = {June},
    year      = {2024},
    pages     = {21807-21818}
}

@inproceedings{
song2021denoising,
title={Denoising Diffusion Implicit Models},
author={Jiaming Song and Chenlin Meng and Stefano Ermon},
booktitle={International Conference on Learning Representations},
year={2021},
url={https://openreview.net/forum?id=St1giarCHLP}
}

@inproceedings{liu2022pseudo,
    title={Pseudo Numerical Methods for Diffusion Models on Manifolds},
    author={Luping Liu and Yi Ren and Zhijie Lin and Zhou Zhao},
    booktitle={International Conference on Learning Representations},
    year={2022},
    url={https://openreview.net/forum?id=PlKWVd2yBkY}
}

@inproceedings{NEURIPS2022_260a14ac,
 author = {Lu, Cheng and Zhou, Yuhao and Bao, Fan and Chen, Jianfei and LI, Chongxuan and Zhu, Jun},
 booktitle = {Advances in Neural Information Processing Systems},
 doi = {10.52202/068431-0418},
 editor = {S. Koyejo and S. Mohamed and A. Agarwal and D. Belgrave and K. Cho and A. Oh},
 pages = {5775--5787},
 publisher = {Curran Associates, Inc.},
 title = {DPM-Solver: A Fast ODE Solver for Diffusion Probabilistic Model Sampling in Around 10 Steps},
 url = {https://proceedings.neurips.cc/paper_files/paper/2022/file/260a14acce2a89dad36adc8eefe7c59e-Paper-Conference.pdf},
 volume = {35},
 year = {2022}
}

@Article{Lu2025,
author="Lu, Cheng
and Zhou, Yuhao
and Bao, Fan
and Chen, Jianfei
and Li, Chongxuan
and Zhu, Jun",
title="DPM-Solver++: Fast Solver for Guided Sampling of Diffusion Probabilistic Models",
journal="Machine Intelligence Research",
year="2025",
month="Aug",
day="01",
volume="22",
number="4",
pages="730--751",
issn="2731-5398",
doi="10.1007/s11633-025-1562-4",
url="https://doi.org/10.1007/s11633-025-1562-4"
}

@inproceedings{NEURIPS2023_9c2aa1e4,
 author = {Zhao, Wenliang and Bai, Lujia and Rao, Yongming and Zhou, Jie and Lu, Jiwen},
 booktitle = {Advances in Neural Information Processing Systems},
 doi = {10.52202/075280-2170},
 editor = {A. Oh and T. Naumann and A. Globerson and K. Saenko and M. Hardt and S. Levine},
 pages = {49842--49869},
 publisher = {Curran Associates, Inc.},
 title = {UniPC: A Unified Predictor-Corrector Framework for Fast Sampling of Diffusion Models},
 url = {https://proceedings.neurips.cc/paper_files/paper/2023/file/9c2aa1e456ea543997f6927295196381-Paper-Conference.pdf},
 volume = {36},
 year = {2023}
}

@inproceedings{
salimans2022progressive,
title={Progressive Distillation for Fast Sampling of Diffusion Models},
author={Tim Salimans and Jonathan Ho},
booktitle={International Conference on Learning Representations},
year={2022},
url={https://openreview.net/forum?id=TIdIXIpzhoI}
}

@InProceedings{pmlr-v202-song23a,
  title = 	 {Consistency Models},
  author =       {Song, Yang and Dhariwal, Prafulla and Chen, Mark and Sutskever, Ilya},
  booktitle = 	 {Proceedings of the 40th International Conference on Machine Learning},
  pages = 	 {32211--32252},
  year = 	 {2023},
  editor = 	 {Krause, Andreas and Brunskill, Emma and Cho, Kyunghyun and Engelhardt, Barbara and Sabato, Sivan and Scarlett, Jonathan},
  volume = 	 {202},
  series = 	 {Proceedings of Machine Learning Research},
  month = 	 {23--29 Jul},
  publisher =    {PMLR},
  url = 	 {https://proceedings.mlr.press/v202/song23a.html}
}

@misc{luo2023latentconsistencymodelssynthesizing,
      title={Latent Consistency Models: Synthesizing High-Resolution Images with Few-Step Inference}, 
      author={Simian Luo and Yiqin Tan and Longbo Huang and Jian Li and Hang Zhao},
      year={2023},
      eprint={2310.04378},
      archivePrefix={arXiv},
      primaryClass={cs.CV},
      url={https://arxiv.org/abs/2310.04378}, 
}

@InProceedings{Yin_2024_CVPR,
    author    = {Yin, Tianwei and Gharbi, Micha\"el and Zhang, Richard and Shechtman, Eli and Durand, Fr\'edo and Freeman, William T. and Park, Taesung},
    title     = {One-step Diffusion with Distribution Matching Distillation},
    booktitle = {Proceedings of the IEEE/CVF Conference on Computer Vision and Pattern Recognition (CVPR)},
    month     = {June},
    year      = {2024},
    pages     = {6613-6623}
}

@InProceedings{Shang_2023_CVPR,
    author    = {Shang, Yuzhang and Yuan, Zhihang and Xie, Bin and Wu, Bingzhe and Yan, Yan},
    title     = {Post-Training Quantization on Diffusion Models},
    booktitle = {Proceedings of the IEEE/CVF Conference on Computer Vision and Pattern Recognition (CVPR)},
    month     = {June},
    year      = {2023},
    pages     = {1972-1981}
}

@InProceedings{Li_2023_ICCV,
    author    = {Li, Xiuyu and Liu, Yijiang and Lian, Long and Yang, Huanrui and Dong, Zhen and Kang, Daniel and Zhang, Shanghang and Keutzer, Kurt},
    title     = {Q-Diffusion: Quantizing Diffusion Models},
    booktitle = {Proceedings of the IEEE/CVF International Conference on Computer Vision (ICCV)},
    month     = {October},
    year      = {2023},
    pages     = {17535-17545}
}

@inproceedings{NEURIPS2023_2aab8a76,
 author = {He, Yefei and Liu, Luping and Liu, Jing and Wu, Weijia and Zhou, Hong and Zhuang, Bohan},
 booktitle = {Advances in Neural Information Processing Systems},
 doi = {10.52202/075280-0580},
 editor = {A. Oh and T. Naumann and A. Globerson and K. Saenko and M. Hardt and S. Levine},
 pages = {13237--13249},
 publisher = {Curran Associates, Inc.},
 title = {PTQD: Accurate Post-Training Quantization for Diffusion Models},
 url = {https://proceedings.neurips.cc/paper_files/paper/2023/file/2aab8a76c7e761b66eccaca0927787de-Paper-Conference.pdf},
 volume = {36},
 year = {2023}
}

@inproceedings{ICLR2025_a4a1ee07,
 author = {Zhao, Tianchen and Fang, Tongcheng and Huang, Haofeng and Wan, Rui and Soedarmadji, Widyadewi and Liu, Enshu and Li, Shiyao and Lin, Zinan and Dai, Guohao and Yan, Shengen and Yang, Huazhong and Ning, Xuefei and Wang, Yu},
 booktitle = {International Conference on Learning Representations},
 editor = {Y. Yue and A. Garg and N. Peng and F. Sha and R. Yu},
 pages = {65811--65841},
 title = {ViDiT-Q: Efficient and Accurate Quantization of Diffusion Transformers for Image and Video Generation},
 url = {https://proceedings.iclr.cc/paper_files/paper/2025/file/a4a1ee071ce0fe63b83bce507c9dc4d7-Paper-Conference.pdf},
 volume = {2025},
 year = {2025}
}

@InProceedings{pmlr-v267-xi25c,
  title = 	 {Sparse Video-Gen: Accelerating Video Diffusion Transformers with Spatial-Temporal Sparsity},
  author =       {Xi, Haocheng and Yang, Shuo and Zhao, Yilong and Xu, Chenfeng and Li, Muyang and Li, Xiuyu and Lin, Yujun and Cai, Han and Zhang, Jintao and Li, Dacheng and Chen, Jianfei and Stoica, Ion and Keutzer, Kurt and Han, Song},
  booktitle = 	 {Proceedings of the 42nd International Conference on Machine Learning},
  pages = 	 {68208--68224},
  year = 	 {2025},
  editor = 	 {Singh, Aarti and Fazel, Maryam and Hsu, Daniel and Lacoste-Julien, Simon and Berkenkamp, Felix and Maharaj, Tegan and Wagstaff, Kiri and Zhu, Jerry},
  volume = 	 {267},
  series = 	 {Proceedings of Machine Learning Research},
  month = 	 {13--19 Jul},
  publisher =    {PMLR},
  url = 	 {https://proceedings.mlr.press/v267/xi25c.html}
}
\bibliographystyle{iclr2027_conference}

\clearpage
\appendix
\section*{Appendix}
\begingroup
\hypersetup{hidelinks}
\newcommand{\appendixentry}[3]{%
  \par\noindent\hspace*{#1em}%
  \hyperref[#2]{\ref*{#2}\quad #3}\nobreak\dotfill\pageref{#2}\par\smallskip}
\appendixentry{0}{app:model-configurations}{Model Configurations}
\appendixentry{0}{app:training-details}{MORCA Implementation Details}
\appendixentry{1}{app:network-architecture}{Network Architecture}
\appendixentry{1}{app:training-hyperparameters}{Training Hyperparameters}
\appendixentry{1}{app:dataset-preparation}{Dataset Preparation}
\appendixentry{0}{app:baseline-details}{Baseline Implementation Details}
\appendixentry{0}{app:budget-proof}{Reduction of the Hard-Constrained MDP}
\appendixentry{0}{app:additional-results}{Additional Experiments}
\appendixentry{1}{app:qualitative-results}{Additional Qualitative Comparisons}
\appendixentry{1}{app:scheduler-overhead}{Scheduler Overhead}
\appendixentry{1}{app:seacache-comparison}{Detailed Comparisons with SeaCache}
\appendixentry{2}{app:matched-budget-schedules}{Schedule Visualizations at the Same Reuse Budget}
\appendixentry{2}{app:seacache-speedups}{Additional Target Acceleration Ratios}
\appendixentry{0}{app:limitations}{Limitations}
\endgroup
\clearpage

\section{Model Configurations}
\label{app:model-configurations}

Table~\ref{tab:model-configurations} summarizes the inference configurations for Wan2.2 and Wan2.1. Both models use 50 denoising steps at a resolution of $832\times480$ and a frame rate of 16 fps. Wan2.2 generates 45-frame videos using DPM++, while Wan2.1 generates 81-frame videos using UniPC. Model-specific settings, including flow shift, CFG scale, and model offloading, are also reported.

\begin{table}[htbp]
\fontencoding{T1}\selectfont
\centering
\setlength{\belowcaptionskip}{5pt}
\caption{{\textbf{Inference configurations for Wan2.2 and Wan2.1.}}}
\label{tab:model-configurations}
\fontsize{8}{9.2}\selectfont
\begin{tabular}{lcc}
\hline
\textbf{Setting} & \textbf{{Wan2.2}} & \textbf{{Wan2.1}} \\
\hline
Resolution ($W\times H$) & $832\times480$ & $832\times480$ \\
Frames & 45 & 81 \\
Frame rate & 16 fps & 16 fps \\
Sampler & DPM++ & UniPC \\
Denoising steps & 50 & 50 \\
Flow shift & 12 & 5 \\
CFG scale & $4/3$ (high/low) & 5 \\
Noise-stage boundary & 0.875 & N/A \\
Model offloading & Enabled & Disabled \\
\hline
\end{tabular}
\end{table}

\section{MORCA Implementation Details}
\label{app:training-details}

\subsection{Network Architecture}
\label{app:network-architecture}

For both video generation models, {we use four separate networks} for $\pi_\varphi$, $Q_{\theta_1}$, $Q_{\theta_2}$, and $V_\psi$. {The pooled features of $x_t$, $x_{t+1}$, and $x_{j_t}$ form a $48\times4\times8\times8$ spatiotemporal tensor and two $48\times8\times8$ tensors representing temporal means and variances, respectively (batch dimension omitted). The spatiotemporal tensor passes through a 3D CNN branch, while the temporal mean and variance tensors are concatenated along the channel dimension and passed through a 2D CNN branch. Each branch has three convolutional layers of widths $(32,32,64)$, followed by adaptive average pooling and flattening, producing 512- and 256-dimensional vectors for the 3D and 2D branches, respectively. Concatenation and a linear projection followed by SiLU map the combined 768-dimensional vector to a 128-dimensional latent embedding. Appending seven scalar state features---$d_t$, $D_t$, $t$, $\ell_t$, $u_t$, $K$, and a cache-valid bit---gives a 135-dimensional input to an MLP with two 256-unit hidden layers, each followed by LayerNorm and SiLU.} The policy outputs two action logits, each Q-network outputs two action values, and the V-network outputs a scalar. Two additional target Q-networks share the architecture of their corresponding Q-networks and are updated through EMA. {Only the policy network is needed at inference time. Further implementation details will be provided in the code release.}

\subsection{Training Hyperparameters}
\label{app:training-hyperparameters}

Table~\ref{tab:iql-hyperparameters} summarizes the training and IQL settings, where $\gamma$ denotes the discount factor, $\tau$ the expectile level for value regression, and $\beta$ the advantage-weighting coefficient.
{The weight cap $w_{\max}$ limits the policy weights as $\min(\exp(\beta\widetilde{A}),w_{\max})$, where $\widetilde{A}$ is the normalized advantage. The EMA decay $\eta$ controls target-Q network updates via $\theta_{\mathrm{target}}\leftarrow\eta\theta_{\mathrm{target}}+(1-\eta)\theta_Q$.}

\begin{table}[htbp]
\fontencoding{T1}\selectfont
\centering
\setlength{\belowcaptionskip}{5pt}
\caption{{\textbf{Training and IQL hyperparameters for the two stages, shared across Wan2.2 and Wan2.1.}}}
\label{tab:iql-hyperparameters}
\fontsize{8}{9.2}\selectfont
\setlength{\tabcolsep}{4pt}
\begin{tabular}{llcc}
\hline
\textbf{Group} & \textbf{Setting} & \textbf{Offline IQL} & \textbf{Online fine-tuning} \\
\hline
\multirow{9}{*}{Training} & Training duration & {200 epochs} & {16 rounds} \\
 & Optimizer & AdamW & AdamW \\
 & $\pi$ LR & $10^{-4}$ & $4\times10^{-5}$ \\
 & Q/V LR & $10^{-4}$ & $10^{-4}$ \\
 & Batch size & 256 & 256 \\
 & Adam betas & $(0.9,0.999)$ & $(0.9,0.999)$ \\
 & Adam epsilon & $10^{-8}$ & $10^{-8}$ \\
 & Weight decay & 0.01 & 0.01 \\
 & Maximum gradient norm & 1.0 & 1.0 \\
\hline
\multirow{8}{*}{IQL} & $\gamma$ & 1.0 & 1.0 \\
 & $\tau$ & {0.7} & 0.7 \\
 & $\beta$ & {1.5} & {1.0} \\
 & $w_{\max}$ & {30} & {20} \\
 & $\eta$ & 0.999 & 0.999 \\
 & New trajectories & N/A & 100 per round \\
 & Critic-only warm-up & N/A & 5 epochs per round \\
 & Joint updates & N/A & 20 epochs per round \\
\hline
\end{tabular}
\end{table}

\subsection{Dataset Preparation}
\label{app:dataset-preparation}

\paragraph{Data Generation Pipeline.}
\label{app:dataset-construction}

A straightforward way to generate random cache schedules is to independently sample a reuse/recompute decision at each denoising step. However, this is likely to produce trajectories with limited learning value, wasting costly generation rollouts. Instead, we adopt a more stable approach by randomizing at the threshold level: we sample a smooth threshold curve that varies across the $N$ denoising steps, replace SeaCache's fixed threshold with this curve, and run inference to collect the resulting denoising trajectory.

To generate a random threshold curve, we control three aspects: its mean level, amplitude, and shape. The mean level is determined by mapping a sampled target acceleration ratio ${\rho^\star}$ to a threshold center $\mu=g({\rho^\star})$ through calibration. A randomly sampled amplitude factor determines the magnitude of threshold variations. To generate the shape, we randomly select a few reference steps from the $N$ denoising steps, sample random relative threshold values at these steps, and apply cubic smoothstep interpolation across all steps. These three factors jointly determine the threshold curve, followed by adjustments to satisfy the required bounds and smoothness constraints.

{After each rollout, we convert the trajectory into $N$ transition tuples. For each step $t$, we compute the cumulative reuse count $u_t=\sum_{i=t+1}^{N}a_i$ and assign the trajectory's total reuse count $K=\sum_{i=1}^{N}a_i$ as the reuse budget. We combine these counts with the step-error signals, step statistics, and latent features extracted from the recorded denoising process to construct the state vector $z_t$. Using the reward defined in the main text, each trajectory yields $N$ transition tuples:}
\[
{(z_t,a_t,r_t,z_{t-1}),\qquad t=N,\ldots,1.}
\]
Further implementation details will be provided in the code release.

\paragraph{Dataset scale.}

We collect 2,500 random cache-scheduling trajectories for Wan2.2 (100 prompts $\times$ 25 schedules), covering speedups from $1.5\times$ to $3.5\times$, and {4,000} for Wan2.1 (1,000 prompts $\times$ {4} schedules){, covering speedups from $2.0\times$ to $4.5\times$}.

\paragraph{Forced-action handling.}

We identify two types of forced actions: (1) actions imposed by the exact reuse-budget constraint, when only one action remains feasible; and (2) mandatory recomputation at the first {denoising step and the last denoising step}, as well as the high-noise/low-noise expert-switching step in Wan2.2. Since these actions are determined by action restrictions rather than policy choices, we exclude the corresponding transitions from the policy loss while retaining them for the standard Q- and V-function updates.

\section{Baseline Implementation Details}
\label{app:baseline-details}

For Wan2.2, we port the official Wan2.1 implementations and add mandatory recomputation at the high-noise/low-noise expert-switching step to initialize the newly activated expert's cache. MagCache uses its official Wan2.2 implementation directly. For TeaCache on Wan2.2, we fit separate fourth-degree polynomials for the high- and low-noise stages using 70 calibration prompts. For Wan2.1, we use the official Wan2.1 implementations released by the respective caching methods. Table~\ref{tab:baseline-configurations} summarizes the key configurations for three approximate acceleration regimes.

\paragraph{Method-specific settings.}
TeaCache and SeaCache use \texttt{use\_ret\_steps=False} for both models. For Wan2.1, SeaCache additionally uses \texttt{power\_exp=3} and \texttt{power\_const=1}. DiCache uses \texttt{probe\_depth=1} for both models, with DCTA enabled and its correction coefficients clipped to $[1,2]$.

\begin{table}[htbp]
\fontencoding{T1}\selectfont
\centering
\setlength{\belowcaptionskip}{5pt}
\caption{{\textbf{Baseline configurations at three target acceleration ratios for each model.}} $\delta$ denotes the cache threshold, $K$ the maximum number of consecutive reuse steps for MagCache, and $R$ the retention ratio.}
\label{tab:baseline-configurations}
\fontsize{8}{9.2}\selectfont
\setlength{\tabcolsep}{5pt}
\thinmuskip=2mu \medmuskip=2mu \thickmuskip=2mu
\renewcommand{\arraystretch}{1.2}
\begin{tabular}{llccc}
\hline
\textbf{Model} & \textbf{Method} & {$\boldsymbol{1.8\times}$} & {$\boldsymbol{2.4\times}$} & {$\boldsymbol{3.0\times}$} \\
\hline
\multirow[c]{4}{*}{Wan2.2} & TeaCache & $\delta=0.29$ & $\delta=0.45$ & $\delta=0.68$ \\
 & MagCache & $\delta=0.072$, $K=2$, $R=0.2$ & $\delta=0.199$, $K=4$, $R=0.2$ & $\delta=0.198$, $K=5$, $R=0.1$ \\
 & DiCache & $\delta=0.075$, {$R=0.2$} & $\delta=0.172$, {$R=0.2$} & $\delta=0.396$, {$R=0.2$} \\
 & SeaCache & $\delta=0.24$ & $\delta=0.38$ & $\delta=0.55$ \\
\hline
\textbf{Model} & \textbf{Method} & {$\boldsymbol{2.4\times}$} & {$\boldsymbol{3.0\times}$} & {$\boldsymbol{4.0\times}$} \\
\hline
\multirow[c]{4}{*}{Wan2.1} & TeaCache & $\delta=0.142$ & $\delta=0.195$ & {$\delta=0.36$} \\
 & MagCache & $\delta=0.1500$, $K=6$, {$R=0.2$} & $\delta=0.9246$, $K=8$, {$R=0.2$} & {$\delta=8.0$}, {$K=40$}, {$R=0.2$} \\
 & DiCache & $\delta=0.155$, {$R=0.2$} & $\delta=0.365$, {$R=0.2$} & {$\delta=12.0$}, {$R=0.17$} \\
 & SeaCache & $\delta=0.28$ & $\delta=0.47$ & {$\delta=0.7$} \\
\hline
\end{tabular}
\end{table}

\FloatBarrier

\section{Reduction of the Hard-Constrained MDP}
\label{app:budget-proof}

We use the MDP and terminal reward defined in the main text, with $s_t$ denoting the full Markov state and $P_t$ {denoting the step-$t$ transition kernel corresponding to $P$ in the main text}. Assume standard Borel state spaces and measurable transition and reward kernels. PSNR is capped at {the fixed finite maximum $r_{\max}$ defined in the main text}, yielding bounded rewards for videos in a fixed bounded pixel range.

The displayed formulas assume that every step permits reuse. Fix an integer budget $K\in\{0,\ldots,N\}$. {Let $b_t=K-u_t$ denote the remaining reuse budget, as in the main text.} Recall the augmented state and action restriction:
\begin{equation}
\begin{aligned}
\bar s_t&=(s_t,{u_t,K}),\qquad
u_t=\sum_{i=t+1}^{N}a_i,\\
\bar{\mathcal A}_t(\bar s_t)
&=\{a\in\{0,1\}:0\le {b_t}-a\le t-1\}.
\end{aligned}
\label{eq:appendix-action-restriction}
\end{equation}
Only integer counts satisfying $0\le u_t\le\min\{K,N-t\}$ and $0\le {b_t}\le t$ are included. The augmented process preserves the initial distribution and transitions $P_t$ of the original state component and all rewards, with $u_N=0$ and $u_{t-1}=u_t+a_t$. Predetermined recomputation steps are handled by excluding reuse at those steps and adjusting the budget range, bounds on the cumulative reuse count, and action restriction to count only steps at which reuse is permitted.

\begin{proposition}
\label{prop:hard-budget-reduction}
Let $\Pi_K^{\mathrm{hist}}$ denote all possibly randomized history-dependent policies satisfying $\sum_{t=1}^{N}a_t=K$ almost surely. Let $\bar\Pi^{\mathrm{hist}}$ and $\bar\Pi^{\mathrm{det}}$ denote, respectively, all admissible possibly randomized history-dependent policies and all admissible time-dependent deterministic Markov policies of the augmented MDP. The reduction guarantees exactly $K$ reuse actions almost surely and preserves the optimal feasible value:
\begin{equation}
\sup_{\pi\in\Pi_K^{\mathrm{hist}}}
\mathbb E_{\tau\sim\pi}[R(\tau)]
=
\max_{\bar\pi\in\bar\Pi^{\mathrm{det}}}
\mathbb E_{\bar\tau\sim\bar\pi}[R(\bar\tau)],
\label{eq:appendix-optimal-feasible-value}
\end{equation}
where $R(\bar\tau)$ denotes the unchanged return along the corresponding original trajectory.

\end{proposition}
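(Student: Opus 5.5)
The plan has three steps: show that admissible policies use exactly $K$ reuses, build a two-way correspondence between feasible policies of the original problem and admissible policies of $\bar{\mathcal M}$, and then use finite-horizon dynamic programming to replace history-dependent policies by deterministic Markov ones.

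\emph{Step 1 (budget guarantee and nonempty action sets).} I would argue by backward induction on $t$ that every reachable augmented state satisfies $0\le b_t\le t$. At $t=N$ this holds because $b_N=K\le N$. If $0\le b_t\le t$, then $\bar{\mathcal A}_t(\bar s_t)$ is nonempty: if $b_t\le t-1$, action $0$ is allowed, and if $b_t=t\ge 1$, action $1$ is allowed. Any allowed action gives $b_{t-1}=b_t-a_t\in[0,t-1]$, so the invariant propagates. At $t=1$ the constraint reads $0\le b_1-a_1\le 0$, which forces $\sum_t a_t=K$ surely along every admissible trajectory. This proves the first claim and shows that $\bar{\mathcal M}$ is well posed.

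\emph{Step 2 (equivalence of feasible policy classes).} Since $u_t$ is a deterministic function of past actions and $K$ is fixed, histories in $\bar{\mathcal M}$ and in $\mathcal M$ determine each other. In one direction, every admissible $\bar\pi\in\bar\Pi^{\mathrm{hist}}$ induces a history-dependent policy of $\mathcal M$ with the same trajectory law and return. By Step 1 this induced policy lies in $\Pi_K^{\mathrm{hist}}$. In the other direction, take $\pi\in\Pi_K^{\mathrm{hist}}$. I claim that almost surely $\pi$ only selects actions in $\bar{\mathcal A}_t$. Suppose that with positive probability it picks $a_t$ with $b_t-a_t<0$ or $b_t-a_t>t-1$. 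In the first case the total reuse count already exceeds $K$. In the second case at most $t-1$ steps remain, which is too few to spend the remaining budget. Either way $\sum a_t\ne K$ on a positive-probability event, a contradiction. Modifying $\pi$ on a null set, for instance by replacing a forbidden action with any allowed one, yields an admissible $\bar\pi$ with the same trajectory law. The two suprema over history-dependent classes therefore coincide.

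\emph{Step 3 (history-dependent to deterministic Markov, with attainment).} This is the main technical obstacle, because of the measurability requirements on Borel spaces. I would define $\bar V_t$ by backward recursion. Set $\bar V_0(\bar s)=0$, and for $t\ge 1$ let
\[
\bar V_t(\bar s)=\max_{a\in\bar{\mathcal A}_t(\bar s)}\Bigl(\mathbb E[r_t\mid \bar s,a]+\textstyle\int \bar V_{t-1}\,dP_t\Bigr),
\]
where $r_t$ is the terminal PSNR reward at $t=1$ and zero otherwise. Rewards are bounded by the cap $r_{\max}$, so each $\bar V_t$ is bounded. Because the action set has at most two elements, the maximum is attained. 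A measurable selector is obtained by taking action $1$ whenever it is feasible and its value is at least that of action $0$, and action $0$ otherwise. The resulting set is a measurable set determined by comparing two measurable functions, so the selector gives a deterministic Markov policy $\bar\pi^\ast$. The standard finite-horizon argument, an induction on $t$ using the tower property, shows that every admissible history-dependent policy has expected return at most $\int \bar V_N\,d\mu_0$, and that $\bar\pi^\ast$ achieves this value. Hence the supremum over $\bar\Pi^{\mathrm{hist}}$ equals the maximum over $\bar\Pi^{\mathrm{det}}$.

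Combining Steps 2 and 3 gives \eqref{eq:appendix-optimal-feasible-value}. Predetermined recomputation steps are handled by the same argument, with $t-1$ in the feasibility bound replaced by the number of remaining steps at which reuse is permitted.
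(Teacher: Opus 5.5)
Your proposal is correct and follows the paper's proof essentially step for step. The paper's argument has the same three stages: feasibility of the restricted action sets, the two-way policy correspondence (with feasible actions assigned on null histories), and finite-horizon backward induction with a measurable maximizing selector. Your invariant $0\le b_t\le t$ just makes explicit what the paper calls straightforward, and breaking ties toward action $1$ instead of toward the smaller action, as the paper does, makes no difference.
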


\begin{proof}
The augmented process is Markov because $s_t$ is Markov and the count update is deterministic.

\noindent\textbf{Step 1. Feasibility.}
It is straightforward to verify that the augmented action sets are nonempty at every nonterminal state, every admissible augmented policy satisfies the prescribed reuse budget almost surely, and every feasible original policy respects the augmented action restriction almost surely.

\noindent\textbf{Step 2. Policy correspondence.}
Since cumulative counts are determined by the original history, any feasible original policy can be represented as an admissible augmented policy by adding these counts, and any admissible augmented policy can be executed in the original process by maintaining them. In either direction, corresponding policies induce identical original state-action trajectory distributions and equal expected returns, with measurable feasible actions assigned on histories of probability zero where necessary. Consequently,
\begin{equation}
\sup_{\pi\in\Pi_K^{\mathrm{hist}}}
\mathbb E_{\tau\sim\pi}[R(\tau)]
=
\sup_{\bar\pi\in\bar\Pi^{\mathrm{hist}}}
\mathbb E_{\bar\tau\sim\bar\pi}[R(\bar\tau)].
\end{equation}

\noindent\textbf{Step 3. Markov optimality.}
Since the terminal reward is included in $r_1$, set ${\bar V_0^\star}=0$ and define {the optimal action-value and value functions of the augmented MDP} backward along the denoising trajectory, in the order $t=1,\ldots,N$
\begin{equation}
\begin{aligned}
{\bar Q_t^\star}(\bar s,a)
&=
\mathbb E\!\left[
 r_t+{\bar V_{t-1}^\star}(\bar s_{t-1})
 \mid\bar s_t=\bar s,\;a_t=a
\right],\\
{\bar V_t^\star}(\bar s)
&=
\max_{a\in\bar{\mathcal A}_t(\bar s)}
{\bar Q_t^\star}(\bar s,a).
\end{aligned}
\end{equation}
The value functions are bounded and measurable, and the finite nonempty action sets admit a measurable maximizing rule $f_t$, with ties broken by choosing the smaller action. For any $\bar\pi\in\bar\Pi^{\mathrm{hist}}$, backward induction from the terminal state $t=0$, using the tower property and the Markov transition, gives almost surely, for $1\le t\le N$,
\begin{equation}
\begin{aligned}
\mathbb E^{\bar\pi}\!\left[
\sum_{i=1}^{t}r_i\mid\bar h_t
\right]
&\le
\sum_{a\in\bar{\mathcal A}_t(\bar s_t)}
\bar\pi_t(a\mid\bar h_t)\,
{\bar Q_t^\star}(\bar s_t,a)\\
&\le{\bar V_t^\star}(\bar s_t),
\end{aligned}
\end{equation}
where $\bar h_t$ is the augmented history through time $t$. Under $f=(f_N,\ldots,f_1)$, equality holds throughout. Integrating over the initial state distribution shows that $f$ attains the optimal expected return over all admissible augmented history-dependent policies. Together with the policy correspondence, this proves the proposition. \end{proof}

\section{Additional Experiments}
\label{app:additional-results}

\subsection{Additional Qualitative Comparisons}
\label{app:qualitative-results}

{Figs.~\ref{fig:qualitative-extra} and~\ref{fig:qualitative-extra-wan21} present additional qualitative comparisons between MORCA and the baseline methods on Wan2.2 and Wan2.1, respectively, at three target acceleration ratios.}

\FloatBarrier

\subsection{Scheduler Overhead}
\label{app:scheduler-overhead}
\label{app:efficiency-accounting}

We additionally measure the time spent on scheduler execution and calculate its proportion of the total inference time. As shown in Table~\ref{tab:scheduler-overhead}, this proportion is approximately $1\%$, indicating acceptable overhead.

\begin{table}[H]
\centering
\setlength{\belowcaptionskip}{5pt}
\caption{\textbf{\fontencoding{T1}\selectfont Scheduler overhead on Wan2.2.} The scheduler introduces acceptable overhead.}
\label{tab:scheduler-overhead}
\fontencoding{T1}\fontsize{8}{9.2}\selectfont
\setlength{\tabcolsep}{10pt}
\renewcommand{\arraystretch}{1.2}

\begin{tabular}{cccc}
\toprule
\textbf{Target} & \textbf{Latency (s)} & \textbf{Scheduler time (s)} & \textbf{Proportion (\%)} \\
\midrule
$1.8\times$ & 549.19 & 3.95 & 0.72 \\
$2.4\times$ & 421.15 & 3.94 & 0.94 \\
$3.0\times$ & 326.93 & 3.93 & 1.20 \\
\bottomrule
\end{tabular}
\end{table}

\subsection{Detailed Comparisons with SeaCache}
\label{app:seacache-comparison}

\subsubsection{Schedule Visualizations at the Same Reuse Budget}
\label{app:matched-budget-schedules}

{Fig.~\ref{fig:c22-matched-budget} visualizes the cache schedules of MORCA and SeaCache under the same reuse budgets. MORCA learns schedules that differ from SeaCache's and achieves better generation fidelity in both examples.}

\begin{figure}[H]
\centering
\includegraphics[width=\linewidth]{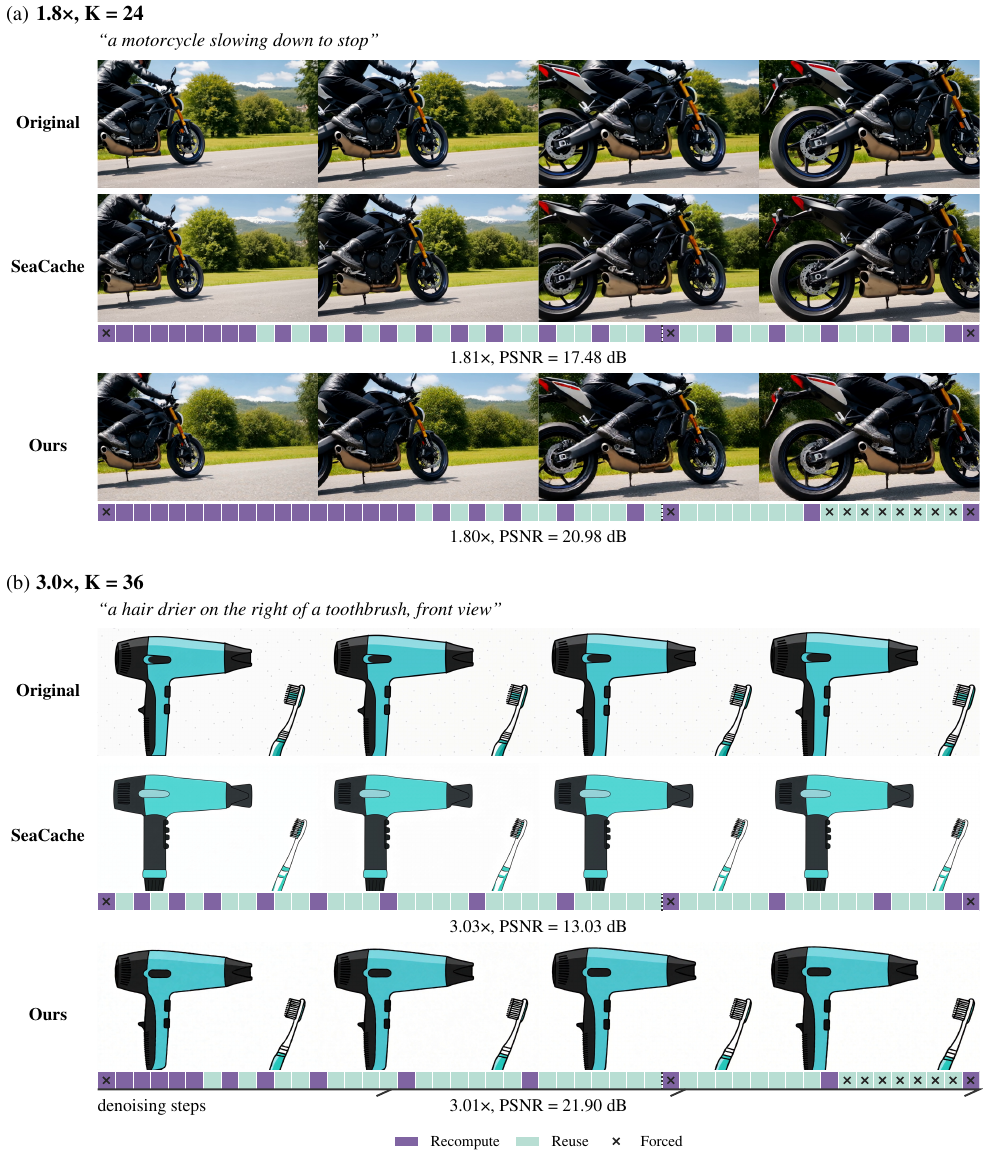}
\caption{\textbf{Visualization of schedules and corresponding generation results under the same reuse budget on Wan2.2.}}
\label{fig:c22-matched-budget}
\end{figure}
\FloatBarrier

\clearpage
\subsubsection{{Additional Target Acceleration Ratios}}
\label{app:seacache-speedups}

{We compare MORCA with SeaCache on Wan2.2 at nine target acceleration ratios ranging from $1.5\times$ to $3.5\times$ in increments of $0.25\times$. As shown in Fig.~\ref{fig:c21-acceleration}, MORCA consistently outperforms SeaCache in PSNR, SSIM, and LPIPS across all settings.}

\begin{figure}[H]
\centering
\includegraphics[width=\linewidth]{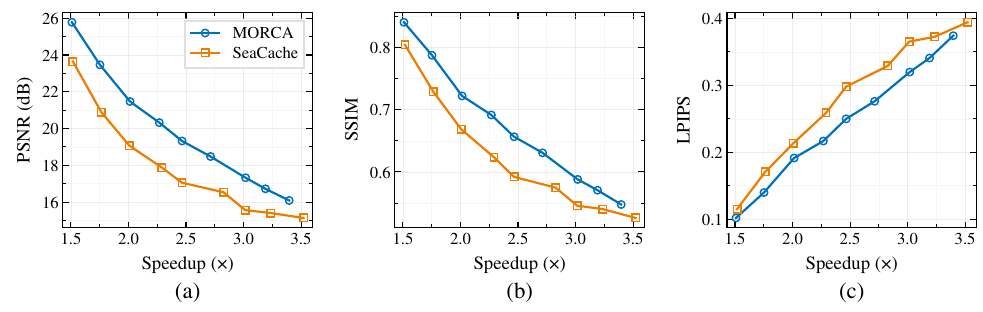}
\caption{{\textbf{Speed--quality trade-offs of MORCA and SeaCache on Wan2.2.} MORCA consistently achieves better fidelity at comparable speedups.}}
\label{fig:c21-acceleration}
\end{figure}

\FloatBarrier

\section{Limitations}
\label{app:limitations}

Due to differences in denoising dynamics across models, MORCA's learned scheduler cannot be directly transferred between models and requires separate training for each model. This introduces additional costs for collecting cache-scheduling trajectories and optimizing the scheduler. These costs may limit its practical adoption when training resources are constrained.

\clearpage
\begin{figure}[p]
\centering
\includegraphics[width=\linewidth]{fig7_additional_qualitative_comparisons_wan22.pdf}
\caption{{\textbf{Additional qualitative comparisons on Wan2.2.} From top to bottom: target acceleration ratios of $1.8\times$, $2.4\times$, and $3.0\times$.}}
\label{fig:qualitative-extra}
\end{figure}
\clearpage
\begin{figure}[p]
\centering
\includegraphics[width=\linewidth]{fig8_additional_qualitative_comparisons_wan21.pdf}
\caption{{\textbf{Additional qualitative comparisons on Wan2.1.} From top to bottom: target acceleration ratios of $2.4\times$, $3.0\times$, and $4.0\times$.}}
\label{fig:qualitative-extra-wan21}
\end{figure}
\clearpage
\end{document}